\def\Exp{{\mathbb{E}}}
\newcommand{\cov}{{\rm cov}}
\newcommand{\corr}{{\rm corr}}
\setlist{noitemsep}
\newcommand{\R}{{\mathbb R}}
\newcommand{\cY}{{\cal Y}}
\newcommand{\cW}{{\cal W}}
\newcommand{\cX}{{\cal X}}
\newcommand{\cZ}{{\cal Z}}
\newcommand{\cM}{{\cal M}}
\newcommand{\cF}{{\cal F}}
\newcommand{\cG}{{\cal G}}
\newcommand{\id}{{\bf I}}
\newcommand{\bx}{{\bf x}}
\newcommand{\bX}{{\bf X}}
\newcommand{\bY}{{\bf Y}}
\newcommand{\bZ}{{\bf Z}}
\newcommand{\bN}{{\bf N}}
\newcommand{\TODO}[1]{\textcolor{red}{#1}}
\newcommand\independent{\protect\mathpalette{\protect\independenT}{\perp}}
\def\independenT#1#2{\mathrel{\rlap{$#1#2$}\mkern2mu{#1#2}}}
\newcommand{\ind}{\independent}
\definecolor{lightgray}{gray}{0.85}
\tikzset{>=stealth'} 
\tikzset{node distance=2cm}
\tikzstyle{graphnode} = 
\tikzstyle{var}   =[graphnode,fill=white]
\tikzstyle{vardashed}   =[graphnode,draw=gray,fill=white]
\tikzstyle{obs}   =[graphnode,fill=black,text=white]
\tikzstyle{obsgrey}   =[graphnode,draw=white,fill=lightgray,text=black]
\tikzstyle{par}    =[graphnode,draw=white,fill=red,text=black] 
 \tikzstyle{crucial} =[graphnode,draw=white,fill=yellow,text=black] 
\tikzstyle{fac}   =[rectangle,draw=black,fill=black!25,minimum size=5pt]
\tikzstyle{facprior} =[rectangle,draw=black,fill=black,text=white,minimum size=5pt]
\tikzstyle{edge}  =[draw=white,double=black,very thick,-]
\tikzstyle{blueedge}  =[draw=white,double=blue,very thick,-]
\tikzstyle{rededge}  =[draw=white,double=red,very thick,-]
\tikzstyle{prior} =[rectangle, draw=black, fill=black, minimum size=
\tikzstyle{dirprior} = [circle, draw=black, fill=black, minimum
\tikzstyle{dot_node}=[draw=black,fill=black,shape=circle]
\newtheorem{theorem}{Theorem}
\newtheorem{lemma}{Lemma}
\newtheorem{definition}{Definition}
\newtheorem{example}{Example}
\newtheorem{remark}{Remark}
\begin{document}

\title{Reinterpreting causal discovery as the task of predicting unobserved joint statistics}

\author{Dominik Janzing$^1$, Philipp M. Faller$^2$,  Leena Chennuru Vankadara$^1$\\
{\small 1) Amazon Research Tübingen, Germany } \\
{\small 2) Karlsruhe Institute of Technology, Germany }
}

\maketitle

\begin{abstract}
If $\bX,\bY,\bZ$ denote sets of random variables, two different data sources may
contain samples from $P_{\bX,\bY}$ and 
$P_{\bY,\bZ}$, respectively. 
We argue that causal discovery can help inferring properties of the `unobserved joint distributions' $P_{\bX,\bY,\bZ}$ or
$P_{\bX,\bZ}$. 
The properties may be
conditional independences (as in `integrative causal inference') or also quantitative statements about dependences. 

More generally, we  define 
a learning scenario where the input is a subset of variables and the label is some statistical property of that subset. Sets of jointly observed variables
define the training points, while unobserved sets are possible test points. 
To solve this learning task,  we infer, as an intermediate step, a causal model from the observations 
that then entails properties of unobserved sets. 
Accordingly, we can define the VC dimension of
a class of causal models and derive generalization bounds for the predictions. 

Here, causal discovery becomes more modest and better accessible to empirical tests than usual: rather than trying to find a causal hypothesis 
that is `true' 
(which is a problematic term when
it is unclear how to define interventions)
a causal hypothesis is {\it useful} whenever 
it correctly predicts statistical properties of unobserved joint distributions. 
This way, a sparse causal graph that omits weak influences may be more useful than a dense one (despite being less accurate) because it 
is able to reconstruct the full joint distribution from marginal distributions of smaller subsets.  

Within such a `pragmatic' application of causal discovery, some popular  heuristic approaches 
become justified in retrospect. It is, for instance, allowed to infer DAGs from partial correlations instead of conditional independences if the DAGs are only used to predict partial correlations. 

We further sketch why our pragmatic view on causality may even cover the usual meaning in terms of interventions and sketch why predicting the impact of interventions can sometimes also be phrased as a task of the above type.
\end{abstract}

\section{Introduction}
\label{sec:introduction}
The difficulty of inferring causal relations from purely observational data lies in the fact that observations drawn from a joint distribution $P_\bX$ with $\bX:=\{X_1,\dots X_n\}$ are supposed to imply statements about how the system behaves under {\it interventions} \citep{Pearl2000,Spirtes1993}. Specifically,
one may be interested in the new joint distribution induced by \textit{setting} 
a subset $\tilde{\bX} \subset \bX$ of the variables to some specific values. Under this interventional defintion of causality, assessing the performance of causal discovery algorithms is highly challenging, primarily due to the absence of datasets with established causal ground truth or even the causal equivalent of a validation set. This issue primarily stems from the fact that conducting experiments or interventions is often infeasible, impractical, unethical, or ill-defined to begin with. For example, it is unclear what it means to intervene on the age of a person or the Gross Domestic Product of a country {(see \citet{janzing2022phenomenological} and references therein for a more elaborate discussion on ill-definedness of interventions)}.


\paragraph{Utility of causal information without reference to interventions.} The utility of causal models goes beyond the sole objective of predicting system behavior under interventions. For example causal information can be useful in facilitating knowledge transfer across datasets from different distributions \citep{anticausal}. Among the numerous other ways in which causal models can be useful, we particularly emphasize the utility of causal models in predicting statistical properties of \textit{unobserved joint distributions},\footnote{By a slight abuse of terminology, we have referred to sets of variables that have not been observed together as `unobserved sets of variables'. However, note that although they have not been observed {\it jointly}, they usually have been observed individually as part of some other observed set).} by leveraging multiple heterogeneous datasets with overlapping variables. Assume we have access to datasets $D_1,\dots,D_k$ containing observations from
different, but overlapping sets $S_1,\dots,S_k\subset \{X_1, X_2, \cdots, X_n\}$ of variables. 
Joint causal models over the variables $\{X_1, X_2, \cdots, X_n\}$  learned from datasets $S_1,\dots,S_k$ then entail statistical properties such as conditional independences over subsets of variables for which no joint observations are available. For instance, various methods under the umbrella of integrative causal inference learn such joint causal models by first applying causal discovery algorithms independently to datasets $S_1,\dots,S_k$ to learn \textit{marginal causal models} and subsequently, constructing a joint causal model that is consistent with the marginal causal models \citep{danks2005scientific, danks2008integrating, claassen2010causal, Tsamardinos, triantafillou2015constraint, huang2020causal}. 


\subsection{A Pragmatic approach to validating causal discovery} 
\label{sec:reinterpret}
Drawing inspiration from this application scenario, we provide a pragmatic approach to validate causal discovery methods. We reframe the problem of causal discovery as the prediction of statistical properties of unobserved joint distributions. Specifically, the learning problem reads as follows:
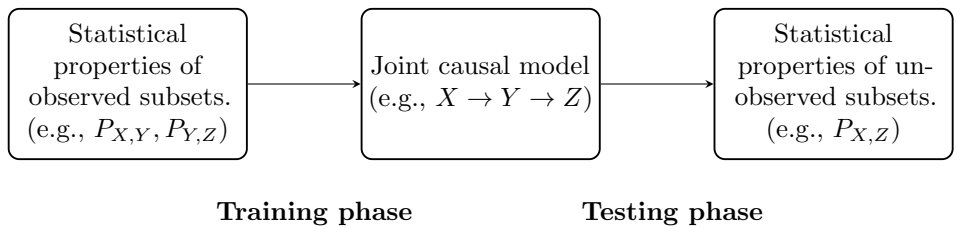
\begin{figure}[h]
	\centering
	\begin{tikzpicture}[
		block/.style={
			draw, 
			fill=white, 
			line width=0.25mm,
			text width=0.23*\columnwidth, 
			minimum height=2cm,
			rounded corners 
		}, 
		font=\small
		]
		\node[block,align=center]
		(step1)
		{Statistical properties of observed subsets. \\ (e.g., $P_{X,Y}, P_{Y,Z}$)};
		\node[right=0.75cm of step1.east,align=center]
		(e1)
		{};
		\node[block,right=1.5cm of step1.east,align=center]
		(step2)
		{Joint causal model \\ (e.g., $X \rightarrow Y \rightarrow Z$)};
		\node[right=0.75cm of step2.east,align=center]
		(e2)
		{};
		\node[block,right=1.5cm of step2.east,align=center]
		(step3)
		{Statistical properties of unobserved subsets. \\ (e.g., $P_{X,Z}$)};
		
		\draw[-stealth]
		(step1.east)--(step2.west)node[pos=0.5, above=0cm]{};
		\draw[-stealth]
		(step2.east)--(step3.west)node[pos=0.5, above=0cm]{};
		
		\node [below=1.3cm of e1] {\textbf {Training phase}};
		\node [below=1.3cm of e2] {\textbf{ Testing phase}};

	\end{tikzpicture}
	\caption{Schema depicting the reframing of causal discovery as a statistical learning problem.}
	\label{schema}
\end{figure}

%
%
Regardless of what kind of statistical properties are meant, under this \textit{statistical paradigm}, causal models entail statements that can be empirically tested without referring to an interventional scenario. Consequently, we drop the ambitious demand of finding `the true' causal model and replace it with a more pragmatic and modest goal of finding causal models that correctly predict unseen joint statistics. 

\begin{remark}
	The joint causal model in Schema \eqref{schema} could be inferred by first inferring marginal causal models (as in integrative causal inference) or directly from the statistical properties of marginal distributions. This distinction is irrelevant for our discussion. 
\end{remark}

\subsection{Why causal models?}
It is not obvious why inferring properties of
unobserved joint distributions from observed ones should take the `detour' via causal models as
visualized in \eqref{schema}.
One could also define a class of {\it statistical} models (that is, a class of joint distributions without any causal interpretation)
that is sufficiently small to yield definite predictions for the desired properties. However, causal models can naturally incorporate causal prior knowledge or causal inductive biases and thereby yield stronger predictions than what may be possible via models without causal semantics. To motiavate this idea, let us consider a simple example. 

\begin{example}[Why causal models are helpful?]
	\label{ex:chain}
	Assume we are given variables $X,Y,Z$ where we observed $P_{X,Y}$ and $P_{Y,Z}$.
	The extension to $P_{X,Y,Z}$ is heavily underdetermined. Now assume that
	we have the additional causal information that $X$ causes $Y$ and $Y$ causes $Z$ (see Figure~\ref{fig:chain}, left), in the sense that both pairs are causally sufficient (see Remark \ref{remark:sufficiency}). In other words, neither
	$X$ and $Y$ nor $Y$ and $Z$ have a common cause.
	This information can be the result of some bivariate causal discovery algorithm that is able to exclude confounding.
	Given that there is, for instance, an additive noise model
	from $Y$ to $Z$  \citep{Kano2003,Hoyer}, a confounder is unlikely
	because it would typically destroy
	the independence of the additive noise term.
	
	\begin{figure}[htp]
	\[
		\hspace{-1em}
		\left. \begin{array}{c}
			\hbox{
			 \resizebox{0.15\textwidth}{!}{%
				\begin{tikzpicture}
					\node[obs] at (0,0) (X) {$X$} ;
					\node[obs] at (2,0) (Y) {$Y$} edge[<-] (X) ;
				\end{tikzpicture}
				}
			}
			\\
			\hbox{
				 \resizebox{0.15\textwidth}{!}{%
				\begin{tikzpicture}
					\node[obs] at (0,0) (Y) {$Y$} ;
					\node[obs] at (2,0) (Z) {$Z$} edge[<-] (Y) ;
				\end{tikzpicture}
				}
			}
		\end{array}
		\right\} \quad \Rightarrow \quad
		\begin{array}{c}
			\hbox{
			 \resizebox{0.25\textwidth}{!}{%
				\begin{tikzpicture}
					\node[obs] at (0,0) (X) {$X$} ;
					\node[obs] at (2,0) (Y) {$Y$} edge[<-] (X) ;
					\node[obs] at (4,0) (Z) {$Z$} edge[<-] (Y) ;
				\end{tikzpicture}
				}
			}
		\end{array}
		\quad \Rightarrow \quad \left\{\begin{array}{c}
			X \independent Z\,| Y \\ P_{X,Y,Z}=P_{X,Y} P_{Z|Y}
		\end{array}\right.
		\]
		\caption{ A simple example where causal information allows to `glue' two distributions to a unique joint distribution.}
		\label{fig:chain}
	\end{figure}
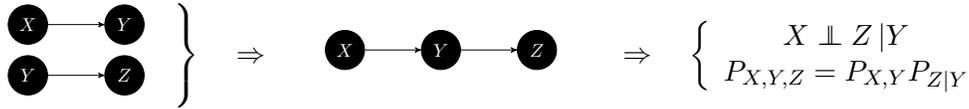
	
	\vspace{0.2cm}
	\noindent
	{\bf Entire causal structure:}
	We can then infer the entire causal structure to be the causal chain
	$X\rightarrow Y\rightarrow Z$ for the following reasons.
	First we show that $X,Y,Z$ is a causally sufficient set of variables:
	A common cause of $X$ and $Z$ would be a common cause of $Y$ and $Z$, too. The pair $(X,Y)$ and $(Y,Z)$ both have no common causes by assumption.
	One checks easily that no DAG with $3$ arrows leaves 
	both pairs unconfounded.
	Checking all DAGs on $X,Y,Z$ with $2$ arrows that have a path  from
	$X$ to $Y$ and from $Y$ to $Z$, we end up with the causal chain
	in Figure~\ref{fig:chain}, middle, as the only option.
	
	\vspace{0.2cm}
	\noindent
	{\bf Resulting joint distribution:}
	This implies
	$
	X \independent Z\,|Y.
	$
	Therefore,
	$
	P_{X,Y,Z}=P_{X,Y} P_{Z|Y}.
	$
\end{example}

Despite it simplicity, Example \ref{ex:chain} demonstrates how incorporating causal prior knowledge into the class of causal models can yield particularly strong predictions about the joint distribution. It is worth noting here that statistical models lack these implications for the joint distribution, as it remains unclear how they can naturally leverage causal prior knowledge to `glue' marginal distributions.

\begin{remark}
	\label{remark:sufficiency}
	Note that we have neglected a subtle issue in discussing Example \ref{ex:chain}.
	There are several different notions of
	what it means that $X$ causes $Y$ in a {\it causally sufficient} way:
	We have above used the purely graphical criterion asking whether there is some variable $Z$ having directed paths to $X$ and $Y$. An alternative option for defining that $X$ influences $Y$ in a causally sufficient way would be to demand that $P_Y^{ do(X=x)}=P_{Y|X=x}$.
	This condition is called `interventional sufficiency' in
	\cite{causality_book}, a condition that
	is testable by interventions on $X$ without referring to a larger background DAG in which $X$ and $Y$ are embedded.
	This condition, however, is weaker than the graphical one and not sufficient for the above argument. This is because one could add the link $X \rightarrow Z$ to the chain $X\rightarrow Y \rightarrow Z$
	and still observe that $P_Z^{do(Y=y)}=P_{Z|Y=y}$, as detailed by Example~9.2 in
	\cite{causality_book}.
	Therefore, we stick to the graphical criterion of causal sufficiency and justify this by the fact that for `generic' parameter values it coincides with interventional sufficiency (which would actually be the more reasonable criterion).
\end{remark}

\paragraph{Causal marginal problem.} The idea that causal constraints can impose meaningful biases on the class of causal models can be supported by a more general motivation of the \textit{causal marginal problem}. Given marginal distributions $P_{S_1},\dots,P_{S_k}$ on sets of variables $S_1, \dots, S_k$,
the problem of existence and uniqueness
of the joint distribution $P_{S_1\cup \cdots \cup S_k}$ that is consistent with the marginals is usually referred to as {\it (probabilistic) marginal problem} \citep{Vorobev1962,Kellerer1964}.
\citet{janzing2018merging}\footnote{Note that the unpublished work \citet{janzing2018merging} is the predecessor of the current work.} introduce the {causal marginal problem} as follows.
Given marginal causal models $M_1,\dots,M_k$ over distinct but overlapping sets of variables $S_1, \dots, S_k$ respectively,
is there a  joint causal model $M$ over $S_1\cup\cdots\cup S_k$ that is \emph{consistent} with the marginal causal models.
A joint causal model $M$ is counterfactually (interventionally) consistent with a marginal model $M_i$, when they agree on all counterfactual (interventional) distributions. Such counterfactual or interventional consistency constraints can impose causal inductive biases over classes of causal models which clearly do not apply for statistical models.
For instance, without formalizing this claim, Example~\ref{ex:chain} suggests that the causal marginal problem may have a unique solution even when the (probabilistic) marginal problem
does not \citep{causalMarginalTalk} -- subject to some genericity assumption explained above.  
\citet{Gresele2022} formally study the special case, where three binary variables $X, Y, Z$ are linked via a V-structure and marginal distributions for $X, Y$ and $Y, Z$ are given.
They show that in this scenario the counterfactual consistency constraints restricts the space of possible joint causal models.
\citet{Guo2023} introduce the term `out-of-Variable (OOV) generalization'  for an agent's ability to handle new situations that involve variables never jointly observed before. As toy example for OOV generalization, they study prediction from distinct yet overlapping causal parents, that is, a scenario where causal directions are known. In contrast, we focus on causal discovery where inferring causal directions is an essential part of the learning task.

With this motivation in mind, we can now summarize the main contributions of this work.

\subsection{Our contributions}
\label{sec:contributions}
The primary contribution of this work is the reinterpretation of the problem of causal discovery as the prediction of joint statistics of unobserved radom variables in the sense described in Section \ref{sec:reinterpret}. This allows us to formalize and study this problem in the framework of statistical learning theory. More explicitly,
\begin{enumerate}[label=(\alph*)]
	
	\item We formalize
	the problem of causal discovery as a standard prediction task where
	the input is a subset (or an ordered tuple) of variables,
	for which we want to test some statistical property.
	The output is a statistical property of that subset (or tuple).
	This way, each observed variable set defines a {\it training} point for inferring the causal model while the unobserved variable sets are the {\it test} instances. (Section~\ref{sec:formal}). 
	
	\item After reinterpreting causal discovery this way, classes of causal models become function classes whose richness can be measured via VC dimension. The problem then becomes directly accessible to 
	statistical learning theory in the following sense.
	Assume we have found a causal model that is consistent with the statistical properties of a large number of observed subsets.
	We can then hope that it also correctly predicts 
	properties of unobserved subsets provided that
	the causal model has been taken from
	a sufficiently `small' class to avoid overfitting on the set of observed statistical properties (see Remark \ref{remark:overfitting}). This `radical empirical' point of view can be
	developed even further: rather than asking 
	whether some statistical property like
	statistical independence is `true', we only ask whether the test at hand rejects or accepts it.\footnote{Asking whether two variables are 'in fact' statistically independent does not make sense for an empirical sample unless the sample
		is thought to be part of an infinite sample -- which is problematic in our finite world.}
	In our reinterpretation we need not ascribe a \enquote{metaphysical} character to statistical independences.
	Hence we can replace the term `statistical
	properties' in the scheme \eqref{schema}
	with `test results'.
	In fact, we do not even have to assume that there is a \enquote{true} underlying causal model.
	We use causal models (a priori) just as predictors of statistical properties. (Section~\ref{sec:VCdim}). 
	
	\item 
	By straightforward
	application of VC learning theory, we then derive error  bounds for the predicted
	statistical properties and discuss how they can be used as guidance for constructing causal hypotheses from not-too-rich classes of hypotheses. (Section \ref{sec:generalization}).
	
	\item We also provide an experimental evaluation of two scenarios with simulated data where we compare prediction errors with our error bounds. (Section \ref{sec:experiments}).
	
	\item Finally, we revisit some conceptual and practical problems of causal discovery and argue that our pragmatic view
	offers a slightly different perspective that is potentially helpful. (Section~\ref{sec:interventions}).
\end{enumerate}

\begin{remark}[\textbf{Benign overfitting}]
	\label{remark:overfitting}
	If there is no label noise, say if all the conditional independences are correctly prescribed in the observed datasets, then models that \textit{overfit} to the observed data (for example, the true DAG) may be superior to the those that do not. Generalization properties for such estimators in the absence of label noise are very well studied \citep{haussler1992decision, bartlett2021failures}. However, in our empirical view point, we can safely omit this discussion. Further note that recent work has shown that the phenomenon of \textit{benign overfitting} -- overfitting to the training data and yet achieving close-to-optimal generalization -- can be observed even in the presence of label noise when learning with overparameterized model classes due to some form of implicit regularization \citep{Bel:2019, belkin2019does, liang2020just, tsigler2020benign, bartlett2020benign, muthukumar2020harmless}. In this paper, we focus on the classical underparameterized regime and refrain from the discussion of generalization in overparameterized systems.
\end{remark}


\subsection{Related work}
The field of integrative causal inference \citep{Tsamardinos} is the work that is closest to the present paper. \citet{Tsamardinos} provides algorithms that use causal inference to combine knowledge from heterogenous data sets and to predict conditional independences between variables that have not been jointly observed. In contrast, our main contribution is the conceptual task of framing causal discovery as a predictive problem which allows causal models to be empirically testable in the iid scenario thereby setting the scene for learning theory. Furthermore, in contrast to \cite{Tsamardinos}, the term `statistical properties' in our setting need not necessarily
refer to conditional independences. There is a broad variety of 
new approaches that infer causal directions 
from statistical properties other than conditional independences \cite{Kano2003,SunLauderdale,Hoyer,Zhang_UAI,
	deterministic,SecondOrder,
	stegle2010probabilistic,peters2010identifying,Mooij2016,Marx2017}. On the other hand, the causal model inferred from the observations may entail statistical properties other than conditional independences -- subject to the model assumptions
on which the above-mentioned inference procedures rely.

\section{The formal setting \label{sec:formal}}

Below we will usually refer to some given set of variables $S:=\{X_{1},\dots,X_{n}\}$ whose subsets are considered. Whenever this cannot cause any confusion, we will not carefully distinguish between the {\it set} $S$ and the {\it vector} $\bX:=(X_{1},\dots,X_{n})$ and also use the term
'joint distribution $P_S$' although the order of variables certainly matters.

\subsection{Statistical properties}
Statistical properties are the crucial concept of this work. On the one hand, they are used to infer causal structure. On the other hand, causal structure is used to predict statistical properties.
\begin{definition}[statistical property]
	Let $S = \{X_1, \dots, X_n\}$ a set of variables.
	A 
	statistical property $Q$ with range $\cY$ is
	given by a function 
	\[
	Q: \Delta_S \rightarrow \cY
	\]
	where $\Delta_S$ denotes the space of joint distributions of $k$-tuples of variables $Y_1, \dots, Y_k\in S$ and $\cY$ denotes some output 
	space. Often we will consider
	binary or real-valued properties, that is
	$\cY=\{0,1\}$, $\cY=\{-1,+1\}$, or $\cY=\R$, respectively. 
\end{definition}  
By slightly abusing terminology, the term `statistical property' will sometimes refer
to the value in $\cY$ that is the output of
$Q$ or to the function $Q$ itself. This will, hopefully, cause no confusion. 

Here, $Q$ may be defined for fixed size
$k$ or for general $k$. Moreover, we will consider properties that depend on the ordering of the variables $Y_1,\dots,Y_k$, those that do not depend on it, or those that are invariant under some permutations $k$ variables. This will be clear from the context.
We will refer to $k$ tuples for which
part of the order matters as `partly ordered 
tuples'.
To given an impression about the variety of 
statistical properties we conclude the section with a list of examples.

We start with an example for a binary property that
does not refer to an ordering:
\begin{example}[statistical independence]
	\[
	Q(P_{Y_1,\dots,Y_k}) = \left\{ \begin{array}{cc}
		1 & \hbox{ for $Y_j$ jointly independent } \\
		0 &  \hbox{ otherwise } 
	\end{array}\right.
	\]
\end{example}
The following binary property 
allows for some permutations of variables:
\begin{example}[conditional independence or partial uncorrelatedness]\label{ex:ci}
	\[
	Q(P_{Y_1,\dots,Y_k}) = \left\{ \begin{array}{ccc}
		1 & \hbox{ for } &  Y_1 \independent Y_2 \,| Y_3,\dots,Y_k \\
		0 &  \hbox{ otherwise } &
	\end{array}\right.
	\]
	Likewise, $Q(P_{Y_1,\dots,Y_k})$ could indicate whether 
	$Y_1$ and $Y_2$ have zero partial correlations, given $Y_3,\dots,Y_k$ (that is, whether they are uncorrelated after linear regression on $Y_3,\dots,Y_k$).
\end{example}

To emphasize that our causal models are not only used to predict conditional independences but also other statistical properties we also mention linear additive noise models \citep{Kano2003}:
\begin{example}[existence of linear additive noise models]
	\label{ex:lingam}
	$Q(P_{Y_1,\dots,Y_k})=1$ if and only
	if there is a matrix $A$ with entries $A_{ij}$,
	that is lower triangular, 
	such that 
	\begin{equation}\label{eq:lingam}
		Y_i = \sum_{j<i} A_{ij} Y_j +N_j,
	\end{equation}
	where $N_1,\dots,N_k$ are jointly independent noise variables. If no such additive linear model exists, we
	set 
	$Q(P_{Y_1,\dots,Y_k})=0$. 
\end{example}
Lower triangularity means that there is a DAG
such that $A$ has non-zero entries $A_{ij}$ whenever there is an arrow from $j$ to $i$.
Here, the entire order of variables matters.
Then \eqref{eq:lingam} is a linear structural equation.
Whenever the noise variables $N_j$ are non-Gaussian, linear additive noise models allow for 
the unique identification of 
the causal DAG 
\citep{Kano2003} if one assumes that the true generating process has been linear.
Then, $Q(P_{Y_1,\dots,Y_k})=1$ holds for those orderings of variables that are compatible with the true DAG.
This way, we have a statistical property that is directly linked to the causal structure (subject to a strong assumption, of course).

The following simple binary property will also
play a role later:
\begin{example}[sign of correlations]\label{ex:signcorrelations}
	Whether a pair of random variables is positively or negatively correlated defines a simple binary
	property in a scenario where all variables are correlated:
	\[
	Q(P_{Y_1,Y_2}) = \left\{\begin{array}{cl} 1 & \hbox{  if } \cov(Y_1,Y_2) >0\\
		-1 & \hbox{ if } \cov(Y_1,Y_2)<0 
	\end{array}\right.
	\]
\end{example}
Note that positivity of the covariance matrix already restricts $Q$, but we will later see a causal model class that   restricts $Q$ even further beyond this constraint.

Finally, we mention a statistical property that is not binary but positive-semidefinite matrix-valued:
\begin{example}[covariances and correlations]\label{ex:cov}
	For $k$ variables $Y_1,\dots,Y_k$
	let $\cY$ be the set of positive semi-definite matrices. Then define
	\[
	Q: P_{Y_1,\dots,Y_k} \mapsto \Sigma_{Y_1,\dots,Y_k},
	\]
	where $\Sigma_{Y_1,\dots,Y_n}$ denotes the
	joint covariance matrix of $Y_1,\dots,Y_n$.
	For $k=2$, one can also get a real-valued property by focusing on the off-diagonal term.
	One may then define a map $Q$  by
	\[
	Q(P_{Y_1,Y_2}) := \cov(Y_1,Y_2),
	\]
	or alternatively, if one prefers correlations, define
	\[
	Q(P_{Y_1,Y_2}) := \corr(Y_1,Y_2).
	\]
\end{example}

\subsection{Statistical and causal models}

The idea of this paper is that causal models are used to
predict statistical properties, but a priori, the models need not be causal.
One can use Bayesian networks, for instance, to encode conditional statistical independences with or without interpreting the arrows as formalizing causal influence.
For the formalism introduced in this section it does not matter whether one interprets the models as causal or not.
Example~\ref{ex:chain}, however, suggested
that model classes that come with a causal semantics are particularly intuitive 
regarding the statistical properties they predict.
We now introduce our notion of `models':
\begin{definition}[models for a statistical property]
	\label{def:model}
	Given a set $S:=\{X_1,\dots,X_n\}$ of variables 
	and some statistical property $Q$, 
	a model $M$ for $Q$ 
	is a class of joint distributions 
	$P_{X_1,\dots,X_n}$ that coincide regarding the output of $Q$, that is,
	\[
	Q(P_{Y_1,\dots,Y_k}) = Q(P'_{Y_1,\dots,Y_k}) \quad \forall P_{Y_1,\dots,Y_k},P'_{Y_1,\dots,Y_k} \in M,
	\]
	where\footnote{To avoid threefold indices we use $Y_j$ instead of $X_{i_j}$ here.} $Y_1,\dots,Y_k \in S$. 
	Accordingly, the property 
	$Q_M$  predicted by the model $M$ is given by a function
	\[
	(Y_1,\dots,Y_k) \mapsto Q_M\left[(Y_1,\dots,Y_k)\right] := Q(P_{Y_1,\dots,Y_k}),
	\]
	for all $P_{X_1,\dots,X_n}$ in $M$,
	where $(Y_1,\dots,Y_k)$ runs over all
	allowed input (partly ordered) tuples of $Q$.
\end{definition}
Formally, the `partly ordered tuples' are
equivalence classes  in $S^k$, where equivalence 
corresponds to irrelevant reorderings of the tuple. To avoid cumbersome formalism, we will
just refer to these equivalence classes as `the allowed inputs'.

Later, such a model will be, for instance, a DAG $G$ and the property $Q$ formalizes all conditional independences that hold for the respective Markov equivalence class. 
To understand the above terminology, note that $Q$
receives a distribution as input and the output of $Q$ tells us the respective property
of the distribution (e.g. whether independence holds). In contrast, $Q_M$ receives a set of nodes (variables) of the DAG as inputs and 
tells us the property entailed by $M$.
The goal will be to find a model $M$ for which
$Q_M$ and $Q$ coincide for the majority of observed tuples of variables.

We now describe a few examples of causal models as predictors of statistical properties that we have in mind. 
Our most prominent one reads:
\begin{example}[DAG as model for conditional independences]\label{ex:dagsci}
	Let $G$ be a DAG with nodes $S:=\{X_1,\dots,X_n\}$ and $Q$ be the set of conditional independences as in Example~\ref{ex:ci}.
	Then, let $Q_G$ be the function on
	$k$-tuples from $S$ defined by
	\[
	Q_G\left[(Y_1,\dots,Y_k)\right] :=0
	\]
	if and only if the Markov condition implies $Y_1\independent Y_2\,|Y_3\dots,Y_k$, and
	\[
	Q_G\left[ (Y_1,\dots,Y_k)\right] :=1
	\] 
	otherwise. 
\end{example}
Note that $Q_G(.)=1$ does not mean that the
Markov condition implies dependence, it only
says that it does not imply independence.
However, if we think of $G$ as a causal DAG, the common assumption of causal faithfulness \citep{Spirtes1993} states that all dependences
that are allowed by the Markov condition occur in reality. Adopting this assumption, 
we will therefore interpret $Q_G$ as a function that predicts dependence or independence, instead of making no prediction if the Markov condition allows dependence.

We also mention a particularly simple
class of DAGs that will appear as an interesting example later: 
\begin{example}[DAGs consisting of a single colliderfree path]\label{ex:cfreepath}
	Let $\cG$ be the set of DAGs that consist of a single colliderfree path 
	\[
	X_{\pi(1)} - X_{\pi(2)} - X_{\pi(3)} - \cdots
	- X_{\pi(n)}, 
	\]
	where the directions of the arrows are such that
	there is no variable with two arrowheads.
	Colliderfree paths have the important property that any dependence between two non-adjacent nodes is screened off by any variable that lies between the two nodes, that is, 
	\[
	X_j \independent X_k|\, X_l,
	\]
	whenever $X_l$ lies between $X_j$ and $X_k$. 
	If one assumes, 
	in addition, that
	the joint distribution is Gaussian,
	the partial correlation
	between $X_j$ and $X_k$, given $X_l$,
	vanishes. This implies that 
	the correlation coefficient of any two nodes is given by the product of pairwise correlations along the path:
	\begin{equation}\label{eq:corrproduct}
		\corr (X_j,X_k)
		= \prod_{i=\pi^{-1}(j)}^{\pi^{-1}(k)-1} \corr (X_{\pi(i)},X_{\pi(i+1)}) =: \prod_{i=\pi^{-1}(j)}^{\pi^{-1}(k)-1} r_i. 
	\end{equation}
	This follows easily by induction because $\corr(X,Z)=\corr (X,Y)\corr(Y,Z)$ for any three variables $X,Y,Z$ with $X\independent Z\,|Y$. 
	Therefore, such a DAG, together with all the correlations between adjacent nodes, 
	predicts all pairwise correlations. 
	We therefore specify our model by 
	$M:=(\pi,r)$, that is, the ordering of nodes and correlations of adjacent nodes. 
\end{example}

The following example shows that a DAG can entail also properties that are more sophisticated than just conditional independences and correlations: 
\begin{example}[DAGs and linear non-Gaussian additive noise]
	\label{ex:daglingam}
	Let $G$ be a DAG with nodes $S:=\{X_1,\dots,X_n\}$ and $Q$ be the linear additive noise property in Example~\ref{ex:lingam}.
	Let $Q_G$ be the function
	on
	$k$-tuples from $S$ defined by
	\[
	Q_G((Y_1,\dots,Y_k)) :=1
	\]
	if and only if the following two conditions hold:\\ 
	(1) $Y_1,\dots,Y_k$ is a
	causally sufficient subset from $S$ in $G$ and, that is, no two different $Y_i,Y_j$ have a common ancestor in $G$ \\
	(2) the ordering $Y_1,\dots,Y_k$ is consistent with $G$, that is,
	$Y_j$ is not ancestor of $Y_i$ in $G$
	for any $i<j$.    
\end{example}
In contrast to $Q$ from Example~\ref{ex:lingam}, $Q_G$ predicts from the graphical structure whether the joint distribution of some subset of variables admits a linear additive noise model. The idea is the following. Assuming that the entire joint distribution of all $n$ variables has been generated by a linear additive noise model \citep{Kano2003}, any $k$-tuple 
$(Y_1,\dots,Y_k)$ also admits a linear additive noise model provided that (1) and (2) hold. 
This is because marginalizations of linear additive noise models remain linear additive noise models whenever one does not marginalize over common ancestors.\footnote{Note that the class of {\it non-linear} additive noise models \citep{Hoyer} is not closed under marginalization.}  
Hence, conditions (1) and (2) are clearly sufficient.
For generic parameter values of the underlying linear model the two conditions are also necessary because 
linear non-Gaussian models render
causal directions uniquely identifiable and also admit the detection of hidden common causes \citep{HoyerLatent08}.

\subsection{Testing properties on data}

So far we have introduced statistical properties as mathematical properties of distributions.
In real-world applications, however, we want to
predict the outcome of a test on empirical data.
The task is no longer to predict whether some set of variables is `really' conditionally independent, we just want to predict whether the statistical test at hand accepts independence.
Whether or not the test is appropriate for the respective mathematical property $Q$ is not relevant for the generalization bounds derived later. If one infers DAGs, for instance, by 
partial correlations and uses these DAGs only to infer partial correlations, it does not matter that non-linear relations actually prohibit to replace 
conditional independences with partial correlations. 
The reader may get confused by these remarks
because now there seems to be no requirement 
on the tests at all if it is not supposed to be a good test for the mathematical property $Q$.
This is a difficult question. One can say, however, that for a test that is entirely
unrelated to some property $Q$ we have no guidance what outcomes of our test a causal hypothesis should predict. The fact that partial correlations, despite all their limitations, approximate conditional independence, 
does provide some justification for expecting 
vanishing partial correlations in many cases where there is d-separation in the causal DAG.

We first specify the information provided by a data set.
\begin{definition}[data set]
	Each data set $D_j$ is an $l_j\times k_j$ matrix of observations, where $l_j$ denotes the sample size and $k_j$ the number of variables. 
	Further, the dataset contains a $k_j$-tuple
	of values from $\{1,\dots,n\}$ 
	specifying the $k_j$ variables $Y_1,\dots,Y_{k_j} \subset \{X_1,\dots,X_n\}$ the samples refer to. 
\end{definition}
To check whether the variables under consideration in fact satisfy the property predicted by the model
we need some statistical test (in the case of binary properties) or an estimator (in the case of real-valued or other properties). 
Let us say that we are given 
some test or estimator for a property $Q$, formally defined as follows:
\begin{definition}[statistical test / estimator for $Q$]
	A test (respective estimator for non-binary properties) for the statistical property $Q$ 
	with range $\cY$ is a map
	\[
	Q_T: D \mapsto Q_T(D) \in \cY, 
	\]
	where $D$ is a data set that involves 
	the observed instances of
	$Y_1,\dots,Y_k$, where $(Y_1,\dots,Y_k)$ is
	a partly ordered tuple that defines
	an allowed input of $Q$.
	$Q_T(D)$ is thought to indicate the outcome
	of the test or the estimated value, respectively. 
\end{definition}

\subsection{Phrasing the task as standard prediction problem}

Our learning problem now reads: given the data sets
$D_1,\dots,D_l$ with the $k$-tuples $S_1,\dots,S_l$
of variables, find a model $M$ such that
$Q_M(S_j) = Q_T(D_j)$ for all data sets $j=1,\dots,l$ or, less demanding, for most of the data sets.
However, more importantly, we would like to choose $M$ such that $Q_M(S_{l+1}) =Q_T(D_{l+1})$ will most probably also hold for
a {\it future} data set $D_{l+1}$.  

The problem of constructing a causal model now becomes a standard learning problem
where the training as well as the test examples  
are {\it data sets}. Note that also 
\citet{Lopez2015} phrased a causal discovery problem as standard learning problem. There,
the task was to classify two variables as `cause' and `effect' after getting
a large number of cause-effect pairs as training examples. Here, however, the data sets refer to observations from different subsets of variables that are assumed to follow a joint distribution
over the union of all variables occurring in any of the data sets.

Having phrased our problem as a standard prediction scenario whose inputs are subsets of variables, we now introduce the usual notion of empirical error on the training data accordingly:
\begin{definition}[empirical error]
	Let $Q$ be a statistical property,
	$Q_T$ a statistical test, and
	$D:=\{D_1,\dots,D_k\}$ a collection of data sets referring to the variable tuples $S_1,\dots,S_k$. Then the empirical training error of model $M$ is defined by
	\[
	L(M) := \frac{1}{k} \sum_{j=1}^k |Q_T (D_j) - Q_M(S_{D_j})|. 
	\]
\end{definition}

Note that our theory does not prefer one model over another if they agree in terms of the predictions they make.
For example, with the independence property as defined in \cref{ex:dagsci}, all DAGs in the same Markov equivalence class are also equivalent with respect to the empirical error $L(M)$.
This further emphasizes that we are not necessarily looking for a \emph{true} model.

To see why this paradigm change can be helpful, assume there was a ground truth model $M$ for which the test $Q_T$ correctly outputs the statistical properties.
Then $M$ will certainly be one of the optimal models.
Yet, if we are bound to make errors (either in evaluating the property via $Q_T$ or in estimating $M$), it becomes less obvious which model to pick.
Graphical metrics like structural Hamming distance (SHD) \citep{tsamardinos2006max} are used frequently to benchmark causal discovery, although it is still an open debate how to quantify the quality of causal models \citep{gentzel2019case}.
In our framework the focus is shifted.
Here, causal models are seen as predictors of statistical properties.
Therefore the best model is the one that predicts the statistical properties the most accurately.
In this sense, we do not even need to reference a \enquote{true} model.
Consider the following example.
\begin{example}[Ground truth vs. prediction]
	\label{ex:gt_vs_pred}
	Assume there is a true causal model for the variables $X, Y, Z$, such that $X$ causes $Y$ and $Z$ is a confounder, as visualized in \cref{fig:confounder_shd_a}.
	Further assume that the confounding effect of $Z$ is very weak, to the extent that the statistical independence test $Q_T$ outputs $X\ind Z$ and $Z\ind Y$.
	Let $\hat{G}_1$ be a graph that reflects these independences as shown in \cref{fig:confounder_shd_b}.
	In the sense of our framework, $\hat{G}_1$ is a good predictor of $Q_T$, as for every tuple of variables (i.e. dataset) it correctly predicts the output of the  independence tests.
	Now consider $\hat{G}_2$ in \cref{fig:confounder_shd_c}.
	$\hat{G}_2$ is closer to the ground truth $G$ with respect to SHD.
	Yet, it reflects the observed independences quite poorly.
	So if we are interested in the output of the independence tests and data at hand, $\hat{G}_1$ is more \enquote{useful} than $\hat{G}_2$.
	Although, we do not claim that $\hat{G}_1$ is \emph{generally} better or worse than $\hat G_2$, and
	the statement is to be understood with respect to the statistical tests considered.
	
	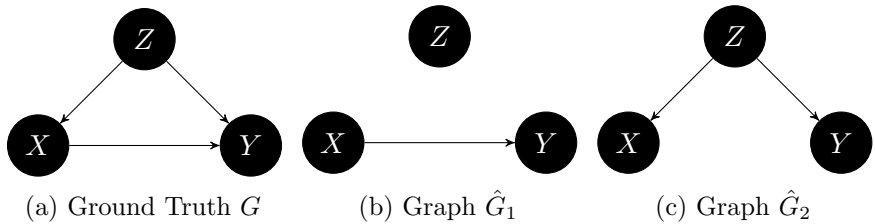
\begin{figure}[htp]
		\centering
		\begin{subfigure}[b]{.3\textwidth}
			\centering
			\begin{tikzpicture}
				\node[obs]  (X) {$X$};
				\node[obs, above right of=X]  (Z) {$Z$};
				\draw (Z) edge[->] node[above]{} (X) ;
				\node[obs, below right of=Z]  (Y) {$Y$} ;
				\draw (X) edge[->] node[below]{} (Y);
				\draw (Z) edge[->] node[above]{}(Y);
			\end{tikzpicture}
			\caption{Ground Truth $G$}
			\label{fig:confounder_shd_a}
		\end{subfigure}
		\begin{subfigure}[b]{.3\textwidth}
			\centering
			\begin{tikzpicture}
				\node[obs]  (X) {$X$};
				\node[obs, above right of=X]  (Z) {$Z$};
				\node[obs, below right of=Z]  (Y) {$Y$} ;
				\draw (X) edge[->] (Y);
			\end{tikzpicture}
			\caption{Graph $\hat G_1$}
			\label{fig:confounder_shd_b}
		\end{subfigure}
		\begin{subfigure}[b]{.3\textwidth}
			\centering
			\begin{tikzpicture}
				\node[obs]  (X) {$X$};
				\node[obs, above right of=X]  (Z) {$Z$};
				\node[obs, below right of=Z]  (Y) {$Y$} ;
				\draw (Z) edge[->] (Y);
				\draw (Z) edge[->] (X);
			\end{tikzpicture}
			\caption{Graph $\hat{G}_2$}
			\label{fig:confounder_shd_c}
		\end{subfigure}
		\caption{ Let $G$ be the true model. If $Q_T$ falsely outputs $X\ind Z$ and $Z\ind Y$ then $\hat{G}_1$ is a better predictor of $Q_T$ than $\hat{G}_2$, even though $\hat{G}_2$ is graphically closer to the ground truth $G$.}
		\label{fig:confounder_shd}
	\end{figure}
\end{example}

Finding a model $M$ for which the training error
is small does not guarantee, however, that 
the error will also be small for future test data. If $M$ has been chosen from a `too rich'
class of models, the small training error may be 
a result of overfitting. Fortunately we have phrased our learning problem in a way that
the richness of a class of causal models
can be quantified by standard concepts from statistical learning theory. 
This will be discussed in the following section.

\section{Capacity of classes of causal models \label{sec:VCdim}}

We have formally phrased our problem
as  a prediction problem where the task is to predict the outcome in $\cY$ of $Q_T$ for
some test $T$ applied to an unobserved variable set. 
We now assume that we are given a class of models $\cM$ defining 
statistical properties $(Q_M)_{M\in \cM}$ that are supposed to predict the outcomes of $Q_T$.

\subsection{Binary properties}
Given some binary statistical property, we can straightforwardly apply the notion of \mbox{VC dimension} \citep{Vapnik}  to classes $\cM$ and define:
\begin{definition}[VC dimension of a model class for binary properties]
	Let $S$ be a
	set of variables $X_1,\dots,X_n$ and $Q$ be a binary property. Let
	$\cM$ be a class of models for $Q$, that is, each $M\in \cM$ defines a map
	\[
	Q_M: (Y_1,\dots,Y_k) \mapsto Q_M\left[(Y_1,\dots,Y_k)\right] \in \{0,1\}. 
	\]
	Then the VC dimension of $\cM$ is the largest number $h$ such that there
	are $h$ allowed inputs $S_1,\dots,S_h$
	for $Q_M$ such that the
	restriction of all $M\in \cM$ to $S_1,\dots,S_h$
	runs over all $2^h$ possible binary functions. 
\end{definition}
Since our model classes are thought to be given by causal hypotheses the following class  is our most important example although we will later further restrict the class to get stronger generalization bounds:
\begin{lemma}[VC dimension of 
	conditional independences entailed by DAGs] 
	\label{lem:dags}
	Let $\cG$ be the set of DAGs with nodes
	$X_1,\dots,X_n$. For every $G\in \cG$,
	we define $Q_G$ as in Example~\ref{ex:dagsci}.
	Then the VC dimension $h$ of
	$(Q_G)_{G\in \cG}$ satisfies
	\begin{equation}\label{eq:VCDAGs}
		h\leq n \log_2 n + n(n-1)/2\in O(n^2).
	\end{equation}
\end{lemma}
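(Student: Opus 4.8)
The plan is to reduce this to the elementary counting bound from VC theory: any \emph{finite} class $\cF$ of binary-valued functions has $\mathrm{VCdim}(\cF)\le \log_2|\cF|$, because shattering $h$ inputs requires $2^h$ distinct restrictions of members of $\cF$ to those inputs, hence $2^h\le|\cF|$. Applied to our situation, even though the domain of each $Q_G$ (all allowed partly ordered tuples, of every size $k$) is large, each DAG $G$ contributes exactly one function $Q_G$, so the number of functions realized by $(Q_G)_{G\in\cG}$ is at most the number of DAGs on the labelled nodes $X_1,\dots,X_n$. It therefore suffices to bound that number.

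First I would note that the map $G\mapsto Q_G$ is in general many-to-one: for instance all DAGs within one Markov equivalence class entail the same conditional independences and hence induce the same $Q_G$. Consequently the number of \emph{distinct} functions $Q_G$ is at most the number of DAGs, and any overcounting of DAGs is harmless. To count (an upper bound on) the DAGs, I would use topological orderings: every DAG admits at least one topological order of its $n$ nodes, and once such an order $\sigma$ is fixed, the DAG is determined by choosing, for each of the $\binom{n}{2}$ unordered pairs, whether the edge oriented consistently with $\sigma$ is present. There are $n!$ orderings and $2^{\binom{n}{2}}$ choices of edge set, so the number of DAGs — and a fortiori the number of distinct $Q_G$ — is at most $n!\,2^{\binom{n}{2}}$.

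Combining these observations with $n!\le n^n$ gives
\[
h \;\le\; \log_2\!\bigl(n!\,2^{\binom{n}{2}}\bigr)\;=\;\log_2 n! + \binom{n}{2}\;\le\; n\log_2 n + \frac{n(n-1)}{2}\;\in\;O(n^2),
\]
which is exactly \eqref{eq:VCDAGs}. There is no serious obstacle here; the argument is short, and the only thing to keep straight is that both sources of slack point in the favourable direction — passing from DAGs to the functions $Q_G$ can only decrease the count, and the topological-ordering enumeration overcounts DAGs. The conceptually interesting point, rather than a technical one, is that this crude cardinality bound already yields the stated $O(n^2)$ rate, which the paper then sharpens by restricting to smaller, more structured model classes.
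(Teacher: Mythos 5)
Your proposal is correct and follows essentially the same route as the paper: bound the number of DAGs by $n!\,2^{n(n-1)/2}$ via topological orderings and edge choices, then apply the finite-class bound $h\le\log_2|\cF|$ together with $n!\le n^n$. The paper invokes Stirling's formula to justify $n!<n^n$ and does not remark on the many-to-one nature of $G\mapsto Q_G$, but these are cosmetic differences.
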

\begin{proof}
	The number $N_n$ of DAGs on $n$ labeled nodes can easily be upper bounded by the number of orderings times the number of choices to draw an edge or not.
	This yields
	$N_n < n! 2^{n(n-1)/2}$.
	Using Stirling's formula we obtain
	\[
	n! < e^{1/(12n)} \sqrt{2\pi n} \left(\frac{n}{e}\right)^n < n^n, 
	\] 
	and thus $N_n< n^n 2^{n(n-1)/2}$.
	Since the VC dimension of a class cannot be larger than the binary logarithm of the number of
	elements it contains, \eqref{eq:VCDAGs} easily follows.
\end{proof}
It would actually be desirable to find a bound for the VC dimension that is smaller than the logarithm of the number of classifiers, since that would be
a more powerful application of VC theory. We leave this to future work.

\vspace{0.3cm}
Note that the number of possible conditional independence tests of the form $Y_1\independent Y_2\,|Y_3$ already grows faster than the VC dimension, namely with the third power.\footnote{The question for which patterns of conditional dependences and independences there exists a joint distribution is in general hard to answer. Conditional independences satisfy semi-graphoid axioms \citep{Lauritzen} and therefore entail further conditional independences. Thus, the set of joint distributions already define restricted class of predictors of conditional independences (although 
the outcomes of empirical tests need not respect these restrictions). However, we will not further elaborate on this since even the class of all DAGs seems to large for our purpose.} 
Therefore, the class of all DAGs already defines a restriction since it is not able to explain 
all possible patterns of conditional (in)dependences even 
when one conditions on one variable only.

Nevertheless, the set of all DAGs may be too large for the number of data sets at hand. We therefore mention the following more restrictive class given by so-called polytrees, that is, DAGs whose skeleton is a tree (hence they contain no undirected cycles). 
\begin{lemma}[VC dimension of 
	cond.~independences entailed by
	polytrees]\label{lem:polytrees}
	Let $\cG$ be the set of polyntrees with nodes
	$X_1,\dots,X_n$. For every $G\in \cG$,
	we define $Q_G$ as in Example~\ref{ex:dagsci}.
	Then the VC dimension $h$ of
	$(Q_G)_{G\in \cG}$ satisfies
	\begin{equation}\label{eq:polytrees}
		h\leq n (\log_2 n +1). 
	\end{equation}
\end{lemma}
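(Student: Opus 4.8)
The plan is to mirror the proof of \cref{lem:dags}: bound the VC dimension by the binary logarithm of the number of polytrees on $n$ labeled nodes. So the task reduces to counting polytrees. A polytree on $n$ nodes is obtained by first choosing an undirected tree (the skeleton) and then choosing an orientation of each of its $n-1$ edges, subject to no constraint other than acyclicity — but since the skeleton has no undirected cycles, \emph{every} orientation of the edges yields a DAG, so all $2^{n-1}$ orientations are admissible. Hence the number of polytrees is exactly $T_n \cdot 2^{n-1}$, where $T_n$ is the number of labeled trees on $n$ nodes. By Cayley's formula, $T_n = n^{n-2}$. Therefore the number of polytrees is $n^{n-2} 2^{n-1} < n^n 2^{n-1}$, and taking $\log_2$ gives $h \le (n-2)\log_2 n + (n-1) < n\log_2 n + n = n(\log_2 n + 1)$, which is \eqref{eq:polytrees}.

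First I would state that the VC dimension of any finite class is at most the base-2 logarithm of its cardinality (the same fact invoked in \cref{lem:dags}), so it suffices to count polytrees. Second I would argue that a polytree is specified by (a) its skeleton, an undirected labeled tree, and (b) an orientation of each edge, and that because a tree contains no undirected cycle, any orientation of its edges is acyclic — no orientation can create a directed cycle without an underlying undirected cycle. This gives the exact count $n^{n-2} \cdot 2^{n-1}$ via Cayley's formula. Third, I would do the elementary estimate: $\log_2(n^{n-2} 2^{n-1}) = (n-2)\log_2 n + n - 1 \le n \log_2 n + n - 1 < n(\log_2 n + 1)$.

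One subtlety worth addressing explicitly: distinct polytrees can still be Markov equivalent (e.g. a chain $X\to Y\to Z$ versus $X\leftarrow Y\to Z$), and Markov-equivalent DAGs induce the same $Q_G$, so the count $n^{n-2}2^{n-1}$ overcounts the number of distinct maps $Q_G$. This is harmless — it only makes the bound on $|\{Q_G : G \in \cG\}|$, and hence on $h$, even safer — but it is worth a sentence so the reader sees the inequality goes in the right direction. I would also note, as in the remark following \cref{lem:dags}, that the skeleton choice alone contributes the dominant $n\log_2 n$ term and the orientations contribute only the linear term, so polytrees are genuinely a much smaller class than general DAGs (whose bound carried an $n(n-1)/2$ term).

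I do not anticipate a serious obstacle here; the only place to be careful is the claim that all $2^{n-1}$ orientations of a tree are acyclic, which is immediate but should be stated rather than assumed, and the bookkeeping to get the clean form $n(\log_2 n + 1)$ rather than a messier expression. Everything else is a direct transcription of the argument already given for \cref{lem:dags}, with Cayley's formula $n^{n-2}$ replacing the crude bound $n!$ on the number of orderings.
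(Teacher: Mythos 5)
Your proof is correct and follows the same overall strategy as the paper: bound the VC dimension by the binary logarithm of the number of distinct classifiers, then count. The only difference is in the counting step. You count polytrees directly as (labeled trees) $\times$ (edge orientations) $= n^{n-2}\cdot 2^{n-1}$, correctly observing that every orientation of a tree is acyclic and that overcounting Markov-equivalent graphs only makes the bound safer. The paper instead counts Markov equivalence classes: it multiplies Cayley's $n^{n-2}$ by the bound $2^{n-1}-n+1$ on the number of equivalence classes per skeleton, citing Radhakrishnan, Solus and Uhler, and arrives at $n^{n-2}(2^{n-1}-n+1)\le n^{n-2}2^n$. Since Markov-equivalent DAGs induce the same $Q_G$, the equivalence-class count is the quantity that actually controls $\lvert\{Q_G : G\in\cG\}\rvert$, so the paper's count is in principle tighter; but as both are then relaxed to the same final expression $n(\log_2 n + 1)$, nothing is gained here, and your version is self-contained and avoids the external citation. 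Your remark that Markov equivalence makes the inequality go the right way is exactly the point that reconciles the two counts.
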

\begin{proof}
	According to Cayley's formula, the number of trees with $n$ nodes reads
	$n^{n-2}$ \citep{Aigner1998}. The number of Markov equivalence classes of polytrees can be bounded from above by 
	$
	2^{n-1} -n +1
	$
	\citep{Radhakrishnan2017}.
	Thus the number of Markov equivalence classes
	of polytrees is upper bounded by
	\begin{equation}\label{eq:numberOffunc}
		n^{n-2} (2^{n-1} -n +1)\leq n^{n-2} 2^n.
	\end{equation}
	Again, the bound follows by taking the logarithm.
\end{proof}

\vspace{0.3cm}

We will later use the following result:
\begin{lemma}[VC dimension of sign of correlations along a path]\label{Lem:VCcorrPath}
	Consider the set of DAGs on $X_1,\dots,X_n$ that consist of a single
	colliderfree path as in Example~\ref{ex:cfreepath} and assume multivariate Gaussianity. The sign of pairwise 
	correlations is then determined by
	the permutation $\pi$ that aligns the graph
	and the sign of correlations of all adjacent pairs. We thus parameterize a model by $M:=(\pi,s)$ where the vector $s:=(s_1,\dots,s_n)$ denotes the signs of adjacent nodes. 
	The full model class $\cM$ is obtained when $\pi$ runs over the entire group of permutations and $s$ over all combinations in $\{-1, +1\}^n$. 
	Let $Q$ be the
	property indicating the sign of the correlation of any two variables as in
	Example~\ref{ex:signcorrelations}. 
	Then the VC dimension of $(Q_M)_{M\in \cM}$ is at most $n$.
\end{lemma}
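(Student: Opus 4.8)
The plan is to bound the VC dimension by counting the number of distinct functions $Q_M$ and taking a logarithm, exactly as in the proofs of Lemmas~\ref{lem:dags} and~\ref{lem:polytrees}, but with a sharper count. A model is a pair $M=(\pi,s)$ with $\pi$ a permutation of $n$ elements and $s\in\{-1,+1\}^n$, so naively there are $n!\,2^n$ models, giving $h\le \log_2(n!\,2^n)=O(n\log n)$ — too weak. The key observation is that $Q_M$ depends on $(\pi,s)$ only through the induced sign function on \emph{pairs}: by \eqref{eq:corrproduct}, $\corr(X_j,X_k)=\prod_{i=\pi^{-1}(j)}^{\pi^{-1}(k)-1} r_i$, so the sign of $\corr(X_j,X_k)$ is the product $\prod s_i$ over the edges $i$ strictly between $j$ and $k$ along the path. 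Hence, if we define $t_j := s_1 s_2 \cdots s_{j-1}$ for $j=1,\dots,n$ (a "cumulative sign" at each node, with $t_{\pi(1)}=+1$ by convention), then $\mathrm{sign}\,\corr(X_j,X_k)=t_{\pi^{-1}(j)}\, t_{\pi^{-1}(k)}$ — the sign of a pair's correlation is just the product of the two node labels $t$.

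So the function $Q_M$ on pairs is completely determined by the assignment of a label in $\{-1,+1\}$ to each of the $n$ variables; that is, $Q_M(X_j,X_k)$ is the sign of $\tau_j \tau_k$ for some $\tau\in\{-1,+1\}^n$ (where $\tau_j$ is the cumulative sign attached to variable $X_j$, obtained by relabelling $t$ through $\pi$). There are at most $2^n$ such label vectors $\tau$, hence at most $2^n$ distinct functions $Q_M$. Since the VC dimension of a finite class is at most the binary logarithm of its cardinality, $h\le\log_2 2^n = n$, which is the claimed bound. (One may even note that since $\tau$ and $-\tau$ give the same pair-signs, there are really at most $2^{n-1}$ functions, giving $h\le n-1$, but the stated bound $n$ suffices.)

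The main thing to get right is the reduction step: showing that the permutation $\pi$ contributes nothing beyond what the node labels already encode. The subtlety is that for a fixed path the pairwise sign pattern only "sees" which variables lie between which others \emph{along that path}, and the relation "sign of $\corr(X_j,X_k)$ = product of edge-signs between them" telescopes precisely into $\tau_j\tau_k$ regardless of how the path is laid out; different $\pi$'s that realize the same node-label vector $\tau$ give identical $Q_M$. I expect this telescoping identity — and the bookkeeping that the endpoint convention $t_{\pi(1)}=+1$ does not lose generality because overall sign flips of $\tau$ are immaterial — to be the only genuinely delicate point; the rest is the same cardinality-to-VC-dimension argument used twice already in the paper. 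I would also remark that this bound is essentially tight, since a star-shaped path centered at one node lets the $n-1$ leaf-signs be chosen independently, shattering $n-1$ pairs, but proving tightness is not required for the statement.
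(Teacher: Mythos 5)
Your proposal is correct and follows essentially the same route as the paper's proof: define a cumulative sign label per node so that the pairwise correlation sign telescopes to a product of two node labels, count at most $2^n$ resulting functions, and bound the VC dimension by the binary logarithm of that count. The extra observations (the $2^{n-1}$ refinement and the tightness remark) go slightly beyond the paper but are not needed for the stated bound.
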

\begin{proof}
	Defining 
	\[
	s_j := \prod_{i=1}^{\pi^{-1}(j)-1} {\rm sign}( \corr (X_{\pi(i)},X_{\pi(i+1)}))
	\]
	we obtain 
	\[
	{\rm sign}( \corr(X_i,X_j) ) = s_i s_j,
	\]
	due to \eqref{eq:corrproduct}.
	Therefore, the signs of all pairwise correlations can be computed from
	$s_1,\dots,s_n$. Since there are $2^n$ possible assignments for these values, $\cG$ thus induces $2^n$ functions and thus the VC dimension is at most $n$.
\end{proof}

\subsection{Real-valued statistical properties}

We also want to obtain quantitative statements about the strength of dependences and therefore
consider also the correlation as an example of a real-valued property.
\begin{lemma}[correlations along a path]
	Let $\cM$ be the model class whose elements
	$M$ are colliderfree paths together with a list of all correlations of adjacent pairs of nodes, see
	Example~\ref{ex:cfreepath}. Assuming also multi-variate Gaussianity, $M$, again, defines
	all pairwise correlations and we 
	can thus define the model induced property
	\[
	Q_M\left[(X_j,X_k)\right]:= \corr_M (X_j,X_k),
	\]
	where the term on the right hand side 
	denotes the correlation determined  by the model $M:=(\pi,r)$ as introduced in Example~\ref{ex:cfreepath}.  
	Then the VC dimension of $(Q_M)_{M \in \cM}$ is in $O(n)$.
\end{lemma}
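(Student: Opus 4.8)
Since $Q$ is real-valued here, by \enquote{VC dimension} we mean the pseudo-dimension (in the sense of Pollard), and the goal is to show that if $m$ input pairs $(X_{j_\ell},X_{k_\ell})$ can be pseudo-shattered by $(Q_M)_{M\in\cM}$, then $m=O(n)$. The crucial observation is that although a model $M=(\pi,r)$ is specified by a permutation $\pi$ together with $n-1$ reals $r=(r_1,\dots,r_{n-1})$ (with $|r_i|\le 1$ since they are correlations of adjacent nodes), the \emph{induced} correlation function $Q_M$ depends on only $n$ real numbers. Concretely, I would attach to each node $X_j$ the cumulative product $w_j:=\prod_{i=1}^{\pi^{-1}(j)-1} r_i$ (empty product $=1$), so that $|w_j|\le 1$, and read off from \eqref{eq:corrproduct} that for any pair
\[
Q_M[(X_j,X_k)]=\corr_M(X_j,X_k)=\Phi(w_j,w_k),\qquad \Phi(a,b):=\frac{\operatorname{sign}(ab)\,\min(|a|,|b|)}{\max(|a|,|b|)}
\]
(with $\Phi\equiv 0$ when a denominator vanishes): if $X_j$ precedes $X_k$ on the path then $\corr_M(X_j,X_k)=\prod_{i=\pi^{-1}(j)}^{\pi^{-1}(k)-1} r_i=w_k/w_j$ and $|w_j|\ge|w_k|$. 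Hence $\{Q_M:M\in\cM\}$ is contained in the class $\cF:=\{(X_j,X_k)\mapsto\Phi(w_j,w_k):w\in\R^n\}$, which is parameterized by $n$ reals and in which the permutation has disappeared.

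Next I would bound the pseudo-dimension of $\cF$ by a count of polynomial sign patterns. Fix input pairs $(X_{j_1},X_{k_1}),\dots,(X_{j_m},X_{k_m})$ and thresholds $t_1,\dots,t_m\in\R$. For each $\ell$, the predicate \enquote{$\Phi(w_{j_\ell},w_{k_\ell})>t_\ell$} can be written as a fixed Boolean combination of the signs (in $\{-,0,+\}$) of the five polynomials $w_{j_\ell}$, $w_{k_\ell}$, $w_{j_\ell}^2-w_{k_\ell}^2$, $w_{j_\ell}(w_{k_\ell}-t_\ell w_{j_\ell})$ and $w_{k_\ell}(w_{j_\ell}-t_\ell w_{k_\ell})$, each of degree at most $2$ in $w\in\R^n$: the quadratic $w_{j_\ell}^2-w_{k_\ell}^2$ decides which of $|w_{j_\ell}|,|w_{k_\ell}|$ is larger, and the remaining ones decide the resulting ratio-vs-$t_\ell$ comparison. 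Consequently the sign vector $\bigl(\operatorname{sign}(Q_M(X_{j_\ell},X_{k_\ell})-t_\ell)\bigr)_{\ell=1}^m$, as $M$ ranges over $\cM$, is a function of the sign vector of these $\le 5m$ polynomials of degree $\le 2$ in $n$ variables, and by the Milnor--Thom/Warren bound (in the standard form used to bound VC/pseudo-dimension of parameterized classes) the number of such sign vectors is at most $(c\,m/n)^n$ for an absolute constant $c$, whenever $m\ge n$.

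To finish, note that pseudo-shattering $m$ pairs forces all $2^m$ sign vectors to be realizable, so $2^m\le(c\,m/n)^n$ whenever $m\ge n$. Writing $m=xn$ and taking logarithms yields $x\le\log_2(c x)$, which bounds $x$ by an absolute constant; hence $m=O(n)$ (the case $m<n$ being trivial), i.e.\ the pseudo-dimension of $(Q_M)_{M\in\cM}$ is $O(n)$.

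The step I expect to be the crux is the reparameterization in the first paragraph. Handling the permutation naively --- a union bound over the $n!$ path orderings, each giving a class of pseudo-dimension $O(n)$ --- would only deliver the weaker estimate $O(n\log n)$, since $\log_2 n!=\Theta(n\log n)$. Absorbing $\pi$ into the $n$ node-quantities $w_j$ (mirroring how the sign version in \cref{Lem:VCcorrPath} depends only on $s\in\{\pm1\}^n$ and not on $\pi$) is exactly what makes $O(n)$ attainable. A minor, purely technical point to dispatch is the family of degenerate models with some $r_i\in\{0,\pm1\}$; these only contribute functions lying in the closure of the generic family and do not affect the order of the bound.
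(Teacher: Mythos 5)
Your proof is correct, and it shares the paper's central idea --- absorbing the permutation $\pi$ and the $n-1$ adjacent correlations into $n$ node-level cumulative parameters, so that the induced class is parameterized by a point of $\R^n$ rather than by an element of $S_n\times\R^{n-1}$ --- but it finishes by a genuinely different route. The paper works additively: it splits each cumulative product into a cumulative log-absolute-value $\alpha_j$ and a cumulative sign parity $s_j$, writes the threshold event $\rho^M_{j,k}\ge\gamma$ as a Boolean combination of linear threshold classifiers in $\alpha$ and of sign classifiers, and then invokes the van der Vaart--Wellner bound on VC dimensions of finite unions and intersections of concept classes. You work multiplicatively with the single vector $w\in\R^n$ of cumulative products, express each event $\Phi(w_{j_\ell},w_{k_\ell})>t_\ell$ as a fixed Boolean function of the signs of at most five degree-$2$ polynomials in $w$, and close with the Warren/Goldberg--Jerrum sign-pattern count. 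Both deliver $O(n)$; your version is arguably cleaner in that it avoids the case split into $\gamma>0$ and $\gamma<0$ and the slightly delicate bookkeeping of concept-class operations, at the price of importing the polynomial-arrangement machinery. Two points worth making explicit if you write this up: (i) your $\Phi$ is only well defined on models with all adjacent correlations nonzero (if some $r_i=0$ then $w_j=w_k=0$ can coexist with $\corr_M(X_j,X_k)\neq 0$), so the closure argument you sketch at the end is actually needed, not merely cosmetic --- the paper simply assumes nonzero correlations outright; and (ii) you bound the pseudo-dimension, whereas the paper (following Vapnik) uses a single global threshold $\gamma$ shared across all evaluation points; since pseudo-shattering is the more permissive notion, your bound dominates the one the lemma asserts, so this is harmless.
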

\begin{proof} We assume, for simplicity, that
	all correlations are non-zero. 
	To specify the absolute value of the correlation between
	adjacent nodes we define the parameters
	\[
	\beta_{i} := \log |\corr_M  (X_{\pi(i-1)},X_{\pi(i)})|.
	\]
	To specify the sign of those correlations we define the binary values
	\[
	{\rm g}_{i} :=     \left\{\begin{array}{cc}
		1 & \hbox{ for }  \corr_M (X_{\pi(i-1)},X_{\pi(i)})  < 0  \\
		0 & \hbox{ otherwise } \end{array}\right.,
	\]
	for all $i\geq 2$. 
	
	It will be convenient to introduce the parameters
	\[
	\alpha_j := \sum_{i =2}^{j} \beta_i,
	\]
	which are cumulative versions of the `adjacent log correlations' $\beta_i$.
	Likewise, we introduce the binaries
	\[
	s_j := \left(\sum_{i =2}^{j} g_i\right) {\rm mod }\, 2,
	\]
	which indicate whether the number of negative correlations along the chain from its beginning 
	is odd or even. 
	
	This way, the correlations between any two nodes can be computed from $\alpha$ and $s$:
	\[
	\corr_M (X_j,X_k) = (-1)^{s_{\pi^{-1}(j)}+s_{\pi^{-1}(k)}} \, e^{|\alpha_{\pi^{-1}(j)} -\alpha_{\pi^{-1}(k)}|}. 
	\]
	For technical reasons we define $\corr$ formally as a 
	function of {\it ordered} pairs of variables
	although it is actually symmetric in $j$ and $k$. 
	We are interested in the VC dimension of the family  $F:=(f_M)_{M\in \cM}$ 
	of real-valued functions defined by
	\[
	f_{M}(j,k):= \corr_{M} (X_j,X_k)=:\rho^M_{i,j}.
	\]
	Its VC dimension is defined as the VC dimension of the
	set of classifiers $C:=(c^\gamma_{M})_{M,\gamma}$ with
	\[
	c^\gamma_{M} (j,k) :=   \left\{\begin{array}{cc}
		1 & \hbox{ for }  \rho^M_{j,k}  \geq  \gamma  \\
		0 & \hbox{ otherwise } \end{array}\right.,
	\]
	To estimate the VC dimension of $C$ we 
	compose it from classifiers whose VC dimension is easier to estimate. 
	
	We first define the family of classifiers given by
	$C^>:=(c^{>\theta}_\alpha)_{\alpha\in \R^-,\theta\in \R}$ with
	\[
	c^{>\theta}_\alpha (j,k):=   \left\{\begin{array}{cc}
		1 & \hbox{ for }  \alpha_{\pi^{-1}(j)}-\alpha_{\pi^{-1}(k)} \geq  \theta  \\
		0 & \hbox{ otherwise } \end{array}\right..
	\] 
	Likewise, we define
	$C^<:=(c^{<\theta}_\alpha)_{\alpha\in \R^-,\theta\in \R}$ with
	\[
	c^{<\theta}_\alpha (j,k):=   \left\{\begin{array}{cc}
		1 & \hbox{ for }  \alpha_{\pi^{-1}(j)}-\alpha_{\pi^{-1}(k)} <  \theta  \\
		0 & \hbox{ otherwise } \end{array}\right..
	\] 
	The VC dimensions pf $C^>$ and $C^<$ are at most $n+1$ because they are given by linear functions on the space of all possible $\alpha \in \R^n$ \citep{Vapnik1995}, Section 3.6, Example 1.  
	Further, we define a set of classifiers that classify only according to the sign of the correlations:
	\[
	S:= (c^M_+) \cup (c^M_-), 
	\]
	where 
	\[
	c^M_+(j,k) := \left\{ \begin{array}{cc} 1 & \hbox{ if } \rho^M_{j,k} \geq 0 \\
		0 & \hbox{ otherwise } \end{array}\right..
	\]
	Likewise, we set
	\[
	c^M_-(j,k) := \left\{ \begin{array}{cc} 1 & \hbox{ if } \rho^M_{j,k} < 0 \\
		0 & \hbox{ otherwise } \end{array}\right..
	\]
	Since both components of $S$ have VC dimension $n$ at
	most, the VC dimension of $S$ is in $O(n)$.

	For $\gamma> 0$, $\rho^M_{j,k} \geq \gamma$ is equivalent to
	\[
	(\rho^M_{j,k} \geq 0) \wedge (\alpha_{\pi^{-1}(j)}-\alpha_{\pi^{-1}(k)} \geq \log \gamma)  \wedge
	(\alpha_{\pi^{-1}(k)} -\alpha_{\pi^{-1}(j)} \geq \log \gamma).   
	\]
	Therefore, 
	\[
	c^\gamma_M \in S  \sqcap C^> \sqcap C^<,
	\] 
	for all $\gamma >0$, where  $\sqcap$ denotes the intersection of
	`concept classes' \citep{vanDerVaart2009}  given by 
	\[
	C_1 \sqcap C_2:= (c_1 \cap c_2)_{c_1\in C_1,c_2 \in C_2}.
	\]
	Likewise, the union of concept classes is given by
	\[
	C_1 \sqcup C_2:= (c_1 \cup c_2)_{c_1\in C_1,c_2 \in C_2},
	\]
	as opposed to the set-theoretic unions and intersections.
	
	For $\gamma <0$, $\rho^M_{j,k}\geq \gamma$ is equivalent to
	\[
	(\rho^M_{j,k} \geq 0) \vee \left\{ (a_{\pi^{-1}(j)} - \alpha_{\pi^{-1}(k)} \geq \log |\gamma |) \wedge (\alpha_{\pi^{-1}(k)} -\alpha_{\pi^{-1}(j)} \geq \log |\gamma|)\right\}.
	\]
	Hence, 
	\[
	c^\gamma_M \in S  \sqcup [ C^> \sqcap C^<],
	\]
	for all $\gamma<0$.
	We then obtain:
	\[
	C \subset (S \sqcap C^> \sqcap C^<) \cup (S \sqcup [ C^> \sqcap C^<]).
	\]
	Hence, $C$ is a finite union and intersection of concept classes and set theoretic union,
	each having VC dimension in $O(n)$. 
	Therefore, $C$ has VC dimension in $O(n)$ \citep{vanDerVaart2009}.
\end{proof}

\section{Generalization bounds \label{sec:generalization}}

\subsection{Binary properties\label{subsec:VCboundsbinary}}

After we have seen that in our scenario causal models
like DAGs define classifiers
in the sense of standard learning scenarios,
we can use the usual VC bounds like  Theorem~6.7 in \cite{Vapnik06} to guarantee generalization to future data sets.
To this end, we need to assume that the data sets are sampled from some distribution of data sets, an assumption that will be discussed at the end of this section.
\begin{theorem}[VC generalization bound]
	\label{thm:VCbound}
	Let $Q_T$ be a statistical test for some statistical binary property and
	$\cM$ 
	be a model class with VC dimension $h$
	defining some model-induced property $Q_M$.  
	Given $k$ data sets $D_1,\dots,D_k$ sampled according to a distribution $P_D$. Then
	\begin{equation}\label{eq:VCbinarygeneral}
		\Exp\left[|Q_T(D)-Q_M(D)|\right] \leq    \frac{1}{k} \sum_{j=1}^k |Q_T(D_j)-Q_M(S_{D_j})|
		+ 2\sqrt{\frac{h\left(\ln \frac{2k}{h} +1\right) - \ln \frac{\eta}{9}}{k}}
	\end{equation}
	with probability $1-\eta$. 
\end{theorem}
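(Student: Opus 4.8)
The statement is nothing but the classical agnostic VC generalization bound, transported through the dictionary set up in Section~\ref{sec:formal}; accordingly the plan is to recast the learning problem as ordinary binary classification and then invoke the off-the-shelf bound. First I would fix the example space to be the set of data sets, equipped with the sampling law $P_D$; the label of an example $D$ is $Q_T(D)$, a deterministic (and tacitly measurable) function of $D$; and the hypothesis class is
\[
\mathcal{H} := \{\, h_M : D \mapsto Q_M(S_D) \ \mid\ M \in \cM \,\},
\]
where $S_D$ denotes the (partly ordered) variable tuple attached to $D$. Under this identification the pointwise $0/1$ loss of $h_M$ on $D$ is exactly $|Q_T(D)-Q_M(S_D)|$, its empirical average over $D_1,\dots,D_k$ is exactly $L(M)$, and its expectation under $P_D$ is the left-hand side of \eqref{eq:VCbinarygeneral}. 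So the claim is precisely a uniform-over-$\mathcal{H}$ deviation bound between empirical and true risk.

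The second step is to control the capacity of $\mathcal{H}$. Each $h_M$ factors as $h_M = Q_M \circ \sigma$ through the \emph{fixed} map $\sigma : D \mapsto S_D$. Hence if data sets $D_1,\dots,D_m$ are shattered by $\mathcal{H}$, the inputs $\sigma(D_i)=S_{D_i}$ must be pairwise distinct (two colliding data sets could never receive opposite labels) and the resulting $m$ allowed inputs are themselves shattered by $(Q_M)_{M\in\cM}$. Therefore $\mathrm{VCdim}(\mathcal{H}) \le h$, the VC dimension assumed for $(Q_M)_{M\in\cM}$.

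Finally I would apply the standard agnostic VC generalization bound — in the exact form of Theorem~6.7 in \cite{Vapnik06} — to the class $\mathcal{H}$ with sample size $k$ and confidence parameter $\eta$. Since that bound holds simultaneously for every hypothesis in $\mathcal{H}$, it holds in particular for every $M\in\cM$, hence for any $M$ chosen on the basis of $D_1,\dots,D_k$ (for instance an empirical-risk minimizer); substituting $h_M$ and using $\mathrm{VCdim}(\mathcal{H}) \le h$ together with the monotonicity of the right-hand side of \eqref{eq:VCbinarygeneral} in $h$ yields exactly the claimed inequality.

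There is no deep obstacle here: the argument is a translation, and the two points that merit a word of care are both mild. The capacity bound of the second step is immediate because composition with a fixed map cannot increase VC dimension. As for the hypotheses under which the agnostic bound holds, the key observation is that it presupposes \emph{no} relationship whatsoever between the label $Q_T(D)$ and the classifier value $Q_M(S_D)$ — so the fact that $Q_T$ may be a crude or biased test for the mathematical property $Q$ is harmless — while the one substantive assumption is that $D_1,\dots,D_k$ are i.i.d.\ draws from a fixed distribution $P_D$ over data sets, whose plausibility and limitations we discuss at the end of this section.
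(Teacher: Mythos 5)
Your proposal is correct and follows essentially the same route as the paper, which offers no explicit proof but simply invokes Theorem~6.7 of \cite{Vapnik06} after identifying data sets with examples, $Q_T$ with labels, and $(Q_M)_{M\in\cM}$ with the hypothesis class. Your additional observation that composing with the fixed map $D\mapsto S_D$ cannot increase the VC dimension is a mild but welcome piece of care that the paper leaves implicit.
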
  
It thus suffices to increase the number of data sets slightly faster than the VC dimension. 

To illustrate how to apply Theorem~\ref{thm:VCbound} we recall the class of
polytrees in Lemma~\ref{lem:polytrees}.
An interesting property of polytrees
is that every pair of non-adjacent nodes can already be rendered conditional independent by one appropriate intermediate node. This is because there is always at most one (undirected) path
connecting them. Moreover, 
for any two nodes $X,Y$ that are not too close together
in the DAG, there 
is a realistic chance
that some randomly chosen $Z$ satisfies
$X\independent Y\,|Z$. 
Therefore, we consider the following scenario:

\begin{enumerate}
	\item
	Draw $k$ triples
	$(Y_1,Y_2,Y_3)$ uniformly at random and check whether
	$
	Y_1 \independent Y_2\,| Y_3.
	$
	\item Search for a polytree $G$
	that is consistent with the $k$ observed \\
	(in)dependences.
	\item Predict conditional independences
	for unobserved triples via $G$
\end{enumerate}


Since the number of points in the training set should increase slightly faster than
the VC dimension (which is $O(n\log n)$, see Lemma~\ref{lem:polytrees}),
we know that a small fraction of
the possible independence tests (which grows with third power) is already sufficient to predict further conditional  independences.

The red curve in Figure~\ref{fig:numberOfTests} provides a rough estimate of how $k$ needs to grow if we want to ensure that the term $\sqrt{.}$ in
\eqref{eq:VCbinarygeneral} is below $0.1$ for $\eta =0.1$. 
The blue curve shows how the number of possible tests grows, which significantly exceeds the required ones after $n=40$.
\begin{figure}[thp]
	\centerline{
		\includegraphics[width=0.8\textwidth]{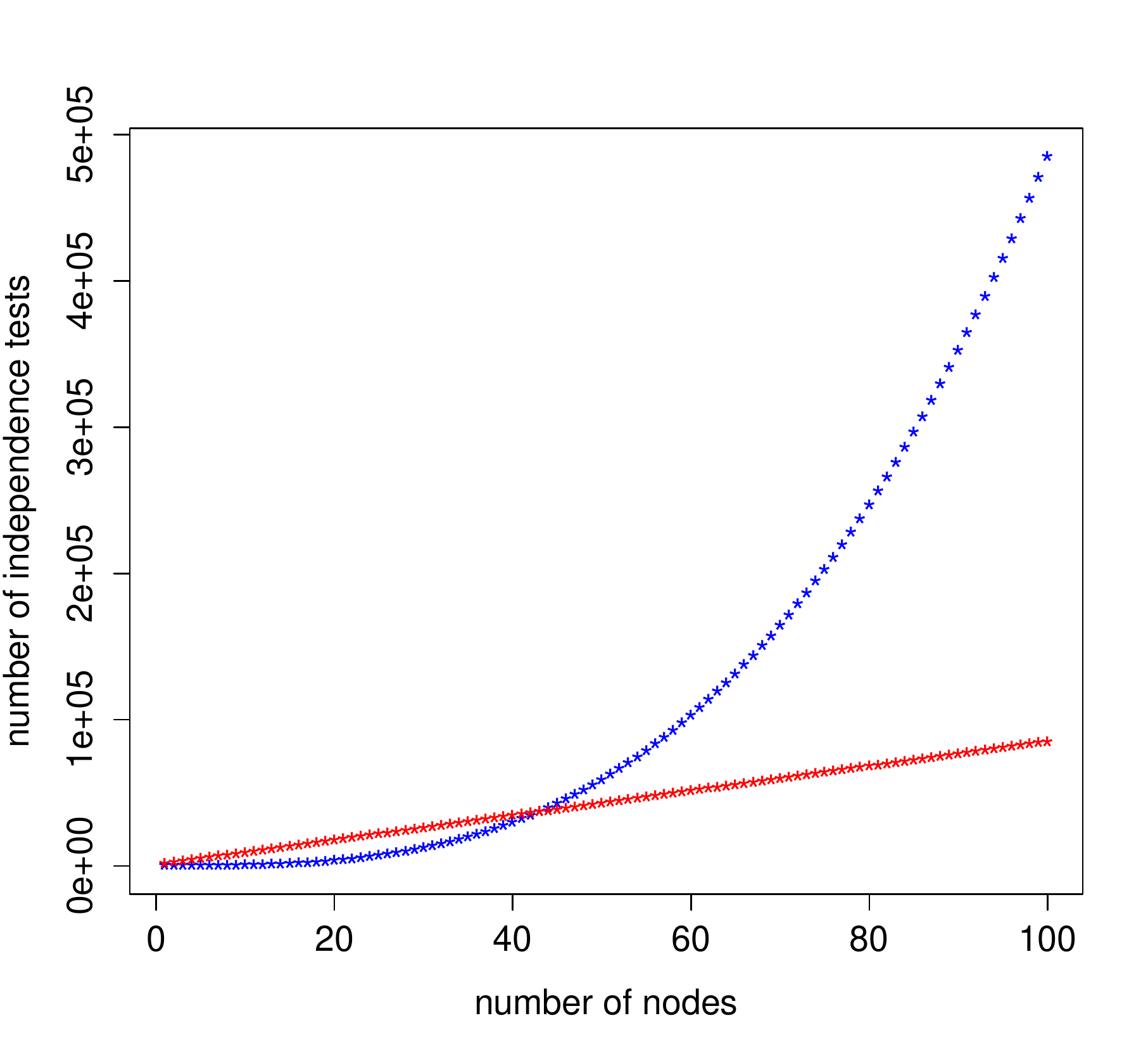}
	}
	\caption{\label{fig:numberOfTests}  The red curve shows how the number of tests required by the VC bound grows with the number of variables, while the blue one shows how the number of possible tests grows. 
	Thus, the bounds are getting useful at about $50$ nodes, where the curves cross.}

\end{figure}
For more than $100$ variables, only a fraction of about $1/4$ of the possible tests is needed to predict that also the remaining ones will hold with high probability. In \cref{subsec:predicting_inds} we will also look at a more practical example of how to apply \cref{thm:VCbound} to conditional independence tests.

While conditional independences have been used for causal inference already since decades, more recently it became popular to use other properties of distributions to infer causal DAGs.
In particular, several methods have been proposed that distinguish between cause and effect from bivariate distributions, e.g.,
\cite{Kano2003,Hoyer,Zhang_UAI,deterministic,discreteAN,Lopez2015,Mooij2016}. 
It is tempting to do multivariate causal inference by 
finding DAGs that are consistent with the bivariate causal direction test.
This motivates the following example.
\begin{lemma}[bivariate directionality test on DAGs]
	\label{lem:bivariate_direction}
	Let $\cG$ be the class of DAGs on $n$ nodes.
	Define a model-induced property $Q_G$ by
	\[
	Q_G(X_i,X_j):=
	\begin{cases}
		1 \text{ if there is a directed path } X_i\to X_j \text{ in } G\\
		0  \text{ else}
	\end{cases} 
	\]
	The VC dimension of $(Q_G)_{G\in \cG}$ is 
	at most $n-1$. 
\end{lemma}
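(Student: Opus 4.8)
The statement to prove is that the class of model-induced properties $(Q_G)_{G\in\cG}$, where $Q_G(X_i,X_j)$ indicates the existence of a directed path from $X_i$ to $X_j$, has VC dimension at most $n-1$. Here the allowed inputs are \emph{ordered} pairs $(X_i,X_j)$ with $i\neq j$. The naive counting bound via the number of DAGs gives only $O(n\log n)$, which is worse than $n-1$, so a structural argument about what patterns of reachability a DAG can realize is needed.

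The plan is to exploit the fact that a directed-path (reachability) relation is the transitive closure of the edge relation of a DAG, hence is itself a strict partial order (it is transitive by construction and acyclic because $G$ is a DAG). So the function $Q_G$, viewed as a $0/1$-valued function on ordered pairs, is exactly the indicator of a strict partial order $\prec_G$ on the $n$ nodes. Conversely, every strict partial order on $\{X_1,\dots,X_n\}$ arises as the reachability relation of some DAG (take the DAG whose edges are exactly the covering relations, or even all the order pairs). So shattering a set of ordered pairs $\{(X_{i_1},X_{j_1}),\dots,(X_{i_h},X_{j_h})\}$ by $(Q_G)_{G}$ is equivalent to realizing all $2^h$ sign patterns on these pairs by strict partial orders on $n$ elements.

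First I would suppose, for contradiction, that some set $E=\{e_1,\dots,e_h\}$ of ordered pairs is shattered with $h\geq n$. Think of $E$ as a set of $h$ directed edges on the vertex set $\{X_1,\dots,X_n\}$; since $h\geq n$, the undirected graph underlying $E$ contains a cycle, i.e.\ there is a subset of the pairs forming an (undirected) cycle $v_0, v_1, \dots, v_m = v_0$, where each consecutive pair corresponds to some $e_{a}\in E$ traversed either forwards or backwards. The key combinatorial fact I want to extract is: for \emph{any} strict partial order $\prec$, one cannot have all the "forward" edges of this cycle present in $\prec$ while all the "backward" edges are absent — more precisely, I want to identify one particular $0/1$ assignment to the edges of the cycle that no strict partial order can realize, thereby contradicting shattering. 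Concretely, traverse the cycle; if $\prec$ declares every edge to point "consistently around" the cycle (each $v_t \prec v_{t+1}$ in the direction of traversal), transitivity forces $v_0\prec v_0$, impossible; so not every consistent-direction labeling is achievable. One must be a little careful because the chosen directions of the pairs in $E$ are fixed, so the target labeling is the one that asserts "present" for exactly those cycle-pairs whose fixed direction agrees with the traversal and... — actually the clean way is: among the $2^h$ patterns there is one asking, for a chosen cycle, that each pair $e_a$ on the cycle gets value $1$ (directed path present); rerouting the traversal so it follows each $e_a$ in its \emph{own} direction and using that this closed walk returns to its start, transitivity of $\prec_G$ forces a node strictly precedes itself. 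The only subtlety is that a closed walk in $E$ need not use each edge in the "forward" sense; but since $h\ge n$ there is a genuine cycle in the underlying multigraph, and I can orient the walk so that it is a directed closed walk in $E$ — this is exactly the statement that the digraph $E$ with $\ge n$ edges on $n$ vertices, \emph{if it had no directed cycle}, would itself be a DAG, and a DAG on $n$ vertices has at most $\binom{n}{2}$ edges but more to the point admits a topological order; the obstruction I actually need is that $E$ cannot be shattered unless $E$ itself, as a digraph, is acyclic \emph{and} moreover its transitive closure is "free". Let me restate the real argument below.

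The cleanest route, which I would write up, is this: a set $E$ of ordered pairs is shattered only if $E$, regarded as a digraph, contains no directed cycle (if it did, the all-ones pattern is unrealizable since $\prec_G$ is acyclic) \emph{and} contains no pair $e$ that is "forced" by the others via transitivity in any sub-pattern — but in fact acyclicity of the digraph $E$ already suffices to bound $|E|$: I claim that if $E$ is shattered then the \emph{underlying undirected graph} of $E$ is a forest. Indeed, if it contained an undirected cycle $C$, orient $C$ as a directed closed walk $W$; choose the pattern that sets to $1$ every edge of $C$ that $W$ traverses forwards and to $1$... no — choose the pattern assigning $1$ to exactly the forward-traversed edges of $C$ and $0$ to the backward-traversed ones; then any realizing DAG $G$ has a directed path for each forward edge, and composing them along $W$ gives a directed path from some node back to itself unless the walk actually must use a backward edge of $C$ with value $0$ — hmm, this requires the walk to be all-forward, which holds precisely when $C$ can be oriented as a directed cycle in $E$. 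So: split into two cases. If $C$ can be oriented as a directed cycle in $E$, the all-ones pattern on $C$'s edges is unrealizable. If not, then $C$ has at least one edge $e^+$ traversed forwards and at least one $e^-$ traversed backwards; pick the pattern that is $1$ on all forward-traversed edges and $0$ on $e^-$ (values on other backward edges arbitrary, say $0$): following $W$ from the tail of $e^-$ around to its head via the forward edges produces, by transitivity, a directed path realizing $e^-$, contradicting its assigned value $0$. The hard part will be handling the bookkeeping of edge orientations along the walk carefully so that transitivity genuinely applies (one must traverse backward edges "for free", which is fine since we only need the forward ones to chain up). Once the underlying graph of $E$ is shown to be a forest on $n$ vertices, $|E|=h\le n-1$ follows immediately, which is the claim.
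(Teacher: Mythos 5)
Your Case~1 is, in essence, the paper's entire proof: the paper takes $n$ or more pairs, extracts a cycle from the underlying undirected graph, and observes that orienting it consistently around would force a directed cycle. The subtlety you correctly isolate --- that the \emph{ordered} pairs of a candidate shattered set need not point consistently around that cycle --- is real, but your Case~2 does not close it. As soon as the undirected cycle carries two or more ``backward'' edges, the forward edges no longer chain from the tail of your chosen $e^-$ to its head, and a backward edge cannot be ``traversed for free'': a label of $1$ on $(v_{t+1},v_t)$ yields a path in the wrong direction for the composition, and a label of $0$ yields nothing. Worse, the intermediate claim you are driving at --- a shattered set of ordered pairs must have a forest as its underlying undirected graph --- is false under the lemma's literal definition of $Q_G$ (where $0$ means ``no directed path from $X_i$ to $X_j$''). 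Take $n=4$ and the four ordered pairs $(v_0,v_1),(v_2,v_1),(v_2,v_3),(v_0,v_3)$; their underlying graph is a $4$-cycle. For any subset $S$ of these pairs, the DAG whose edge set is exactly $S$ has all edges running from $\{v_0,v_2\}$ to $\{v_1,v_3\}$, so no two edges compose, its reachability relation restricted to the four pairs is precisely the indicator of $S$, and the four pairs are shattered. (A balanced bipartition shatters $\lfloor n/2\rfloor\lceil n/2\rceil$ ordered pairs in the same way.) So the gap you flagged as ``the hard part'' is not a bookkeeping issue: no argument can fill it for the property as literally written.

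The cycle argument is sound under the reading suggested by the paper's opening sentence (``the causal directions can be oriented in all $2^h$ possible ways''): the binary label records the \emph{direction} of the path, so that the label complementary to $1$ on $(X_i,X_j)$ asserts a directed path from $X_j$ to $X_i$ (naturally paired with restricting $\cG$ to DAGs in which every pair of nodes is connected by a directed path, so the two labels exhaust the possibilities). Under that reading, every labeling of the cycle's pairs prescribes an orientation of each cycle edge, the orientation running consistently around the cycle composes by transitivity of reachability to $v_0\prec v_0$ and is hence unrealizable, and your Case~1 argument applied to these induced orientations completes the proof with no Case~2 needed. Your reduction of $Q_G$ to strict partial orders is correct and clean; to obtain a correct proof you should either adopt the directional version of the property or otherwise strengthen the meaning of the label $0$, since with $0$ meaning ``no path'' the bound $n-1$ itself fails.
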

\begin{proof}
	The VC dimension is the maximal number $h$ of 
	pairs of variables for which the causal directions can be oriented in all $2^h$ possible ways. 
	If we take $n$  or more pairs, the undirected graph 
	defined by connecting each pair contains a cycle 
	\[
	(X_1,X_2),(X_2,X_3),\dots,
	(X_{l-1},X_l),(X_l,X_1),
	\]
	with $l\leq n$.
	Then, however, not all $2^l$ causal directions are possible because 
	\[
	X_1 \to X_2 \to \cdots X_l \to X_1
	\]
	would be a directed cycle. Thus the VC dimension is smaller than $n$.
\end{proof}
This result can be used to infer causal directions for pairs that have not been observed together:
\begin{enumerate}
	
	\item Apply the bivariate causality test $Q_T$ to $k$ randomly chosen ordered pairs, where $k$ needs to grow slightly faster than $n$.
	
	\item Search for a DAG $G\in \cG$ that is consistent with many of the outcomes.
	
	\item Infer the outcome of further bivariate causality tests from $G$. 
	
\end{enumerate}

It is remarkable that the generalization bound holds regardless of how bivariate causality is tested and whether one understands which statistical features are used to infer the causal direction.
Solely the fact that a causal hypothesis from a class of low VC dimension matches the majority of the bivariate tests ensures that it generalizes well to future tests.

\subsection{Real-valued properties}

The VC bounds in Subsection~\ref{subsec:VCboundsbinary} referred to binary statistical properties. To consider
also real-valuedd properties note
that 
the VC dimension of a class of real-valued functions $(f_\lambda)_{\lambda\in \Lambda}$ with $f:\cX \rightarrow \R$
is defined as the VC dimension of
the set of binary functions, see Section~3.6 \cite{Vapnik1995}:
\[
\big(f_\lambda^{-1}\left((-\infty, r]\right)\big)_{\lambda \in \Lambda,r \in \R}.
\]
By combining (3.15) with (3.14) and (3.23) in \cite{Vapnik1995} we obtain:
\begin{theorem}[VC bound for real-valued statistical properties] Let $(Q_M)_{M\in \cM}$
	be a class of $[A,B]$-valued model-induced 
	properties with VC dimension $h$. 
	Given $k$ data sets $D_1,\dots,D_k$ sampled from some distribution $P_D$. Then
	\begin{align*}
	\Exp[|Q_T(D)-Q_M(D)|] &\leq    \frac{1}{k} \sum_{j=1}^k |Q_T(D_j)-Q_M(S_{D_j}) \\
	&+ (B-A) \sqrt{\frac{h\left(\ln \frac{k}{h}+1\right) -\ln \frac{\eta}{4}}{k}} \\
	\end{align*}
	with probability at least $1-\eta$.
\end{theorem}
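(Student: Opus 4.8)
The plan is to reduce the real-valued case to the binary VC machinery that has already been set up. Recall that the VC dimension of a class of real-valued functions $(Q_M)_{M\in\cM}$ is by definition the VC dimension of the associated class of sublevel (equivalently superlevel) indicator functions $\big(\mathbf{1}[Q_M(\cdot)\le r]\big)_{M\in\cM,\,r\in\R}$, which is the given number $h$. So the statement is really a statement about learning a $\{0,1\}$-valued function of the pair $(M,r)$, and the absolute-loss regression bound on the left-hand side should follow from a uniform convergence statement for this enlarged classifier class together with the standard ``integrate the $0/1$ loss over thresholds'' identity $|a-b|=\int_{A}^{B}\big|\mathbf{1}[a\le r]-\mathbf{1}[b\le r]\big|\,dr$ for $a,b\in[A,B]$.

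Concretely, first I would invoke the standard VC uniform convergence bound (Theorem~6.7 / (3.15),(3.14),(3.23) in \cite{Vapnik1995}, exactly as the paper cites) for the class of indicators $\{(D,r)\mapsto \mathbf{1}[Q_T(D)\le r] \ne \mathbf{1}[Q_M(S_D)\le r]\}$ indexed by $M$, which has VC dimension $h$: with probability at least $1-\eta$, the expected $0/1$ disagreement at a \emph{uniformly random threshold} $r\in[A,B]$ is bounded by the empirical disagreement at that random threshold plus $\sqrt{\big(h(\ln(k/h)+1)-\ln(\eta/4)\big)/k}$, uniformly over $M$. Then I would average (integrate) this inequality over $r$ against the uniform measure on $[A,B]$, multiply through by $(B-A)$, and use the loss identity above on both sides: the integrated expected $0/1$ disagreement becomes $\tfrac{1}{B-A}\Exp[|Q_T(D)-Q_M(D)|]$ and the integrated empirical disagreement becomes $\tfrac{1}{B-A}\cdot\tfrac{1}{k}\sum_j |Q_T(D_j)-Q_M(S_{D_j})|$, and rearranging yields precisely the claimed inequality. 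Alternatively, and more in the spirit of the cited formulas, one directly applies Vapnik's bound for bounded real-valued loss functions, where the range factor $(B-A)$ enters exactly as the multiplicative constant in front of the capacity term; I would present whichever route the authors prefer, but both give the same bound.

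The main obstacle is bookkeeping rather than conceptual: one must be careful that the quantity controlled is the \emph{uniform} deviation over all $M\in\cM$ (so that it applies to the data-dependent $M$ one actually selects), and that the confidence $\eta$ is not split across a continuum of thresholds — this is exactly why passing through the single classifier class of pairs $(M,r)$, rather than a union bound over $r$, is the right move, and why the VC dimension of the real-valued class (which already accounts for the threshold) is the correct capacity parameter. A minor point to check is the boundary behaviour of the threshold integral at $r=A$ and $r=B$ and the convention for $\mathbf{1}[\,\cdot\le r]$ versus $\mathbf{1}[\,\cdot< r]$, which only affects a measure-zero set of thresholds and hence does not change the integral. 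Once these are handled, the rest is the routine substitution described above, and no genericity or model-class-specific argument is needed — the bound is completely generic in $Q_T$, $Q_M$, and $P_D$, just as in the binary case.
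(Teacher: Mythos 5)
The paper does not actually prove this theorem: the bound is obtained verbatim by combining equations (3.14), (3.15) and (3.23) of \cite{Vapnik1995}, i.e.\ by directly invoking Vapnik's uniform risk bound for $[A,B]$-valued loss functions, with $(B-A)$ entering as the range factor. That is precisely the second route you mention, so that part of your proposal coincides with the paper's (one-line) argument and reproduces the stated constants. Your primary route --- thresholding and integrating the identity $|a-b|=\int_A^B\bigl|\mathbf{1}[a\le r]-\mathbf{1}[b\le r]\bigr|\,dr$ against uniform convergence that holds simultaneously over all pairs $(M,r)$ --- is a genuinely different, more self-contained derivation, and its overall architecture (no union bound over a continuum of thresholds; integrate a single uniform inequality over $r$) is sound. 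Two caveats keep it from being a complete substitute. First, the binary class you must control is the disagreement class $\{D\mapsto \mathbf{1}[Q_T(D)\le r]\oplus\mathbf{1}[Q_M(S_D)\le r]\}_{M\in\cM,\,r}$; since the reference indicator $\mathbf{1}[Q_T(\cdot)\le r]$ itself varies with $r$, this is not the XOR of the threshold class of $(Q_M)_{M\in\cM}$ with one fixed function, so its VC dimension being of order $h$ needs an argument rather than holding by definition --- this is the usual hypothesis-class-versus-loss-class gap, which Vapnik's formulation absorbs by stating the capacity directly for the loss functions. Second, instantiating the binary step with \cref{thm:VCbound} yields a capacity term $(B-A)\cdot 2\sqrt{\bigl(h(\ln\tfrac{2k}{h}+1)-\ln\tfrac{\eta}{9}\bigr)/k}$ rather than the stated $(B-A)\sqrt{\bigl(h(\ln\tfrac{k}{h}+1)-\ln\tfrac{\eta}{4}\bigr)/k}$; the two routes give the same rate in $h$ and $k$, but they do not give ``the same bound'', and only the direct citation reproduces the theorem's constants as written.
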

This bound can easily be applied to the prediction of correlations via collider-free paths: Due to Lemma~\ref{Lem:VCcorrPath}, we then have $h\in O(n)$. Since correlations are in $[-1,1]$, we can set $2$  for $B-A$.

\subsection{Remarks on exchangeability required for learning theory}
\label{subsec:iid}
In practical applications, the scenario is
usually somehow different because
one does not choose `observed' and `unobserved' subsets randomly in a way that justifies 
exchangeability of data sets. Instead, the observed sets are defined by the available data sets. One may object that
the above considerations are therefore inapplicable. There is no formal argument against this objection. However, there may be reasons to believe that the observed variable sets at hand are not substantially different from the unobserved ones whose properties are supposed to be predicted,
apart from the fact that they are observed. Based on this belief, one may still use the above generalization bounds 
as guidance on the richness of the class of causal hypotheses that is allowed to obtain good generalization properties.

\section{Experiments}
\label{sec:experiments}
In this section we 
use simulated toy scenarios that illustrate how statistical properties of subsets of variables can be predicted via the detour of
inferring a causal model from a small class.

In the first scenario, we want to interpret a DAG as a model for conditional independences as in \cref{ex:dagsci} and use the classical PC algorithm \citep{Spirtes1993} to estimate a DAG from data.
In this setting \cref{thm:VCbound} provides guarantees for the accuracy of our model on conditional independence tests that have not been used for the construction of the DAG.
In the second scenario, we interpret polytrees as models for the admissibility of additive noise models similar to \cref{ex:daglingam}.
Again, we could interpret this scenario such that a causal discovery algorithm has \emph{in principle} access to all pairs of variables but does not use all of them (e.g. due to computational constraints).
Alternatively, we can interpret this scenario as the problem of merging marginal distributions in the sense of `integrative causal inference' \citep{Tsamardinos}, also when marginal distributions of some subsets are unavailable due to missing data.  

\subsection{Predicting independences}
\label{subsec:predicting_inds}
In \cref{ex:dagsci} we interpreted a DAG as a model of conditional independences, i.e. we defined $Q_G[(Y_1, \dots, Y_k)] = 1$ if the Markov condition implies $Y_1 \ind Y_2 \mid Y_3, \dots, Y_k$ and else $0$.
Given that there is a graph $G^*$ such that the joint distribution of the data $P_\bX$ is Markovian to $G^*$ and the tests $Q_T$ correctly output the independences, the empirical risk $\mathbb{E}[|Q_T(D) - Q_G(D)|]$ becomes zero for any $G$ in the Markov equivalence class of $G^*$.
In order to find this equivalence class, one could conduct all possible independence tests and construct the equivalence class from them.
The PC algorithm is more efficient and can recover the underlying equivalence class with a polynomial number of conditional independence tests for sparse graphs \citep{kalisch2007estimating}.
Further, in the limit of infinite data the result of the PC algorithm will perfectly represent the conditional independences used during the algorithm.
In this sense we want to interpret the PC algorithm as an ERM algorithm, that aims to minimize the empirical risk 
\begin{equation}
\label{eq:empirical_loss}
	\frac{1}{k} \sum_{i=1}^k|Q_T(D_i) - Q_G(D_i)|,
\end{equation} 
where $k$ is bounded by a polynomial in $n$.
It is important to note, that technically in this scenario \cref{thm:VCbound} does not hold, as the samples $D_1, \dots, D_k$ are not chosen independently.
We see this as an opportunity to test the conjecture that in this case the available variable sets do not substantially differ from the unseen ones, similar to what we have described in  \cref{subsec:iid}. 

\paragraph{Data generation}
In our experiments we synthetically generated linear structural models as ground truth.
First, we uniformly chose an order $\pi:\{1, \dots, n\}\to \{1, \dots, n\}$ of the variables and for each pair of nodes $X_i, X_j$ we add an edge $X_i \to X_j$ if $\pi(i) < \pi(j)$ and with probability $p$.
For each edge $X_i\to X_j$ we draw a structural coefficient $a_{i, j}$ uniformly from $[0.1, 1)$ and set all other $a_{i', j'} = 0$.
Then every value  $x_j$ of a variable $X_j$ is a linear combination of the values of previous variables and some noise
\begin{displaymath}
	x_j = n_j + \sum_{\mathclap{k(i) < k(j)}} a_{i, j}\cdot x_i,
\end{displaymath}
where the noise terms $n_j$ are all drawn independently from a standard normal distribution.
In all experiments we choose $p$ such that the expected degree of a node $\frac{2}{n}\frac{p(n^2-n)}{2} = 1.5$.

The PC algorithm has one hyperparameter, namely the confidence level of the conditional independence tests.
We randomly generate 10 datasets with 10, 20 and 40 nodes respectively as described above and each one with 30.000 samples.\footnote{Note, that even for this large sample size we still cannot hope to always decide correctly, as there is a non-negligible chance to have \enquote{almost} non-faithful distributions \citep{uhler2013}}
We conducted the Fisher-$Z$ test for partial correlations for all triplets $(X_i, X_j, X_k)$ and $(X_i, X_k, \emptyset)$ and compared the result with the graphical ground truth.
For each dataset we calculated the $F_1$-score and chose the confidence level with the maximal average score, which was $0.001$ in this case.

\paragraph{Experimental setup}
In this experiment, we want to see how close the empirical performance (the performance on the tuples $D_i = (Y_1, \dots, Y_k)$ used to construct the DAG) is to the expected performance (the performance on all possible tuples).
Due to computational constraints, we restrict ourselves to the case where $k \le 3$, i.e. we condition on at most one variable.
We calculate the empirical loss as in \cref{eq:empirical_loss}, where $Q_T$ denotes the statistical test results and $Q_G$ $d$-separation in $G$.
As the PC algorithm only outputs a partially directed graph, we randomly draw a DAG from the corresponding Markov-equivalence class to get a model $G$.
It can happen though, that the output of the PC algorithm is a PDAG that does not describe an equivalence class of DAGs.
In this case we randomly orient conflicting edges.
For the expected error we conduct the conditional independence test on all possible triples and tuples (i.e. $k\in \{2, 3\}$).

The differences between empirical risk and expected risk for different datasets are plotted in \cref{fig:exp_inds}.
For $n=20, 40, 100$ there are 20 datasets respectively.
We also plotted the theoretical error bound from \eqref{eq:VCbinarygeneral}.
Note that we rescaled the bounds by the factor $0.6$ for visualization purposes.
We can see that the empirical risk is closer to the expected risk for instances for larger numbers of nodes when the PC algorithm uses more conditional independence tests to construct the graph.

\begin{figure}[htp]
	\centering
	\includegraphics[width=.6\textwidth]{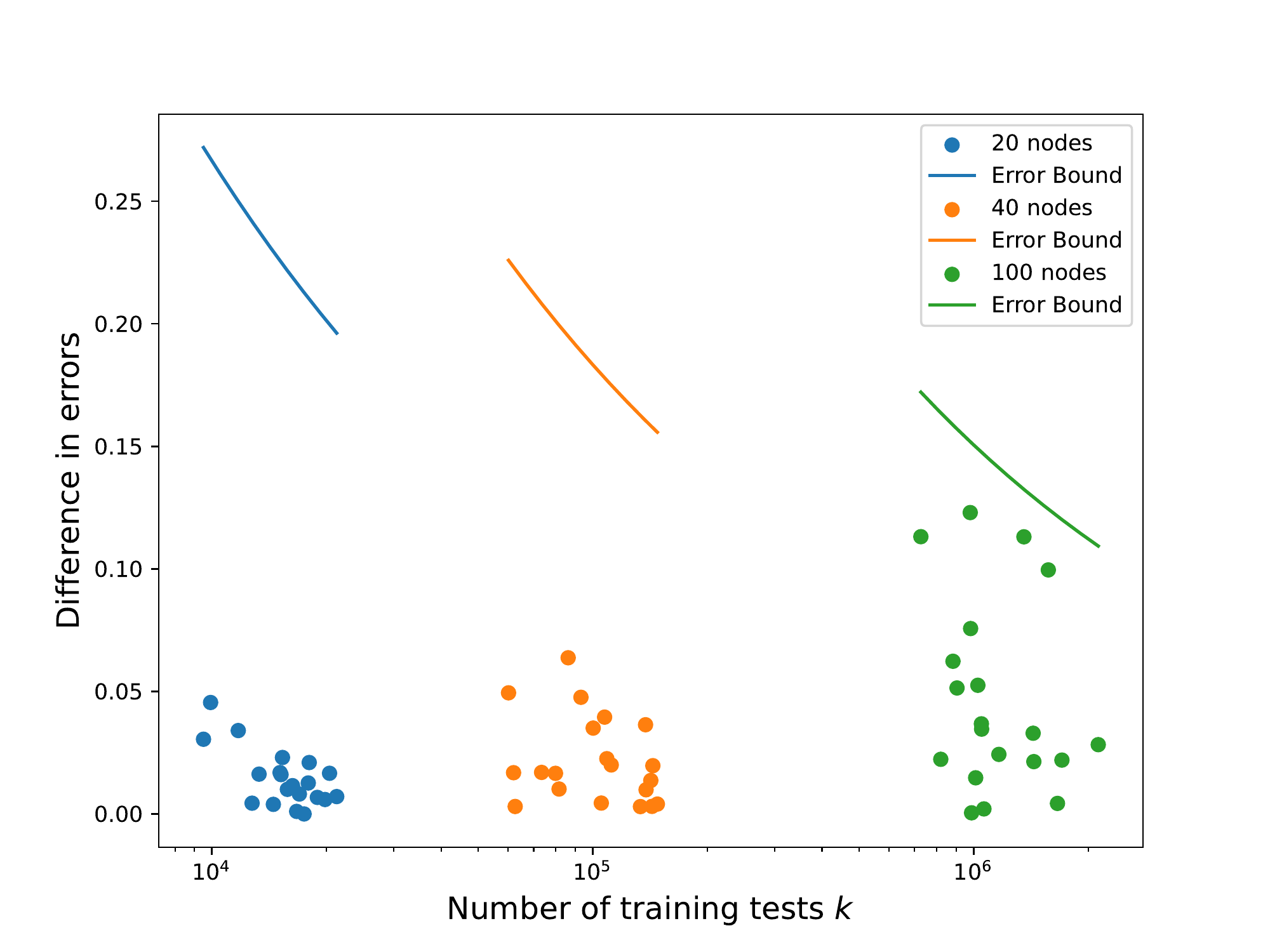}

	\caption{The difference between empirical error and expected error ($|\frac{1}{k} \sum_{i=1}^k|Q_T(D_i) - Q_G(D_i)| - 
	\mathbb{E}|Q_T(D) - Q_G(D)||$) versus the VC error bound when graphs with $n=20, 40, 100$ nodes are used as predictors for conditional independences. 
	}

	\label{fig:exp_inds}
\end{figure}

\subsection{Predicting existence of additive noise models (ANMs)}
In the next experiment we want to present a concrete example 
that constructs a simple DAG, namely a polytree, based on bivariate information (motivated by Lemma \ref{lem:bivariate_direction}) which is then used to infer bivariate statistical properties. 
We will use non-linear additive noise models  \citep{Hoyer} since they have achieved reasonable results 
in bivariate causal discovery \citep{Mooij2016}. We recall that $P_{X,Y}$ is said to admit an additive noise model from $X$ to $Y$ if there exists a
function $f$ such that the residual $Y- f(X)$ is independent of $X$. 


\paragraph{Generative model}
In this experiment we will generate the joint distribution via generalized additive models, i.e. we assume that for each node $X_i$ there are non-linear functions $f_{i, j}$ such that the values $x_i$ of $X_i$ are given by
	$$x_i = \sum_{\mathclap{x_j\in pa(X_i)}}f_{i, j}(x_j) + n_i,$$
where $n_i$ is a value of the noise term $N_i $ (which is independent from the parent nodes $X_j\in PA(X_i)$). 
This ensures that even for nodes with multiple parents there is a bivariate ANM from each of its parents. 

\paragraph{Testing for a bivariate ANM}
The statistical property of interest will be, whether $X_i$ and $X_j$  admit an additive noise model. In other words, we want to predict, whether we can construct a noise node $\hat N_i$ such that $\hat N_i \ind X_j$ by regressing $X_i$ on $X_j$ and then calculating $\hat{N}_i = X_i - \hat{f}_i(X_j)$ , where $\hat f_i$ denotes the regression function.
We then define our statistical property as
\begin{displaymath}
	Q(X_i, X_j) = \begin{cases}
	1  \quad \text{ if }   X_i\not\ind X_j    \text{ and } \exists \hat f_i: X_j\ind (X_i - \hat{f}_i(X_j)) \\
	0  \quad \text{ else}
	\end{cases}
\end{displaymath}
where the condition $X_i\not\ind X_j$ rules out the trivial case, where the nodes are already independent without subtraction of the regression result. 
Accordingly, for our statistical test $Q_T$ we replace the existence of $\hat f_i$ with an estimated regression function and the independences with statistical independence tests.

\paragraph{Representing $Q$ via a polytree}
Now we want to argue, that this statistical property can be represented by a polytree.
First, we notice that under some conditions, the existence of an ANM is not transitive.
\begin{lemma}[non-transitivity of additive noise] 
	\label{lem:non_linear_anm}
	Let $X, Y, Z$ be random variables, with the following structural causal model 
	\begin{displaymath}
		Z = f_Z(Y) + N_Z, \quad Y = f_Y(X) + N_Y, \quad X = N_X,
	\end{displaymath}
	where the noise terms $N_i$ are jointly independent, the cumulative distribution function of $N_Y$ is strictly increasing and the characteristic function of $N_Z$ is non-zero almost everywhere (with respect to the Lebesgue measure).
	Let further $f_Z$ be invertible and monotonously increasing in $n_Y$. If $f_Y$ is not constant and $f_Z$ is non-linear such that the map 
	$x\mapsto f_Z(f_Y(x) +  n_Y)$ is differentiable and fulfils
	\begin{equation}
	\label{eq:condition_non_linear_scm}
		\exists x\in\cX, n_Y, n_Y' \in \mathcal{N}_Y: \frac{\partial}{\partial x}f_Z(f_Y(x) +  n_Y) \neq \frac{\partial}{\partial x} f_Z(f_Y(x) + n'_Y),
	\end{equation}
	where $\cX$ and $\mathcal{N}_Y$ denote the support of $X$ and $N_Y$ respectively, 
	then there is no additive noise model from $X$ to $Z$.
\end{lemma}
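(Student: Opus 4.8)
The plan is to argue by contradiction. Suppose $P_{X,Z}$ did admit an additive noise model from $X$ to $Z$, i.e. there is a measurable $g$ with $\tilde N := Z - g(X)$ independent of $X$. Conditioning on $X = x$ and using that $N_X, N_Y, N_Z$ are jointly independent with $X = N_X$, the conditional law (write $\mathcal L$ for ``law'') of $Z$ given $X = x$ equals $\mathcal L(W_x) * \mathcal L(N_Z)$, where $W_x := f_Z(f_Y(x) + N_Y)$, whereas the ANM assumption forces it to equal $\mathcal L(\tilde N) * \delta_{g(x)}$. Passing to characteristic functions and cancelling $\phi_{N_Z}$ — which is licit since $\phi_{N_Z}$ vanishes only on a Lebesgue-null set, and since characteristic functions are continuous an identity that holds almost everywhere holds everywhere — I would deduce that $\{\mathcal L(W_x)\}_x$ is a translation family: fixing a reference point $x_0$, one gets $W_x \stackrel{d}{=} W_{x_0} + \ell(x)$ with $\ell(x) := g(x) - g(x_0)$.

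Next I would unravel this translation relation at the level of distribution functions. Since $f_Z$ is increasing and invertible, $P(W_x \le w) = F_{N_Y}\big(f_Z^{-1}(w) - f_Y(x)\big)$, so the relation reads $F_{N_Y}(f_Z^{-1}(w) - f_Y(x)) = F_{N_Y}(f_Z^{-1}(w - \ell(x)) - f_Y(x_0))$. Using that $F_{N_Y}$ is strictly increasing, hence injective on the support of $N_Y$, and substituting $u := f_Z^{-1}(w) - f_Y(x)$, this collapses to the functional equation
\[
f_Z\big(u + f_Y(x)\big) = f_Z\big(u + f_Y(x_0)\big) + \ell(x),
\]
valid for all $x$ in the support of $X$ and all $u$ in the (interior of the) support of $N_Y$. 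This last display also shows that $\ell$ inherits the regularity of $f_Z$ and $f_Y$, in particular it is differentiable.

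The decisive step is then to differentiate this identity with respect to $x$: the first term on the right is constant in $x$, so $f_Z'(u + f_Y(x))\, f_Y'(x) = \ell'(x)$, whence $f_Z'(u + f_Y(x))$ does not depend on $u$ at any $x$ with $f_Y'(x) \neq 0$; equivalently $f_Z'$ is constant on the interval $f_Y(x) + \mathrm{int}(\mathrm{supp}\, N_Y)$. Now I would take the reference point $x_0$ to be precisely the point $x$ supplied by hypothesis \eqref{eq:condition_non_linear_scm}; at that point $f_Y'(x) \neq 0$ (this is forced, since otherwise both partial derivatives in \eqref{eq:condition_non_linear_scm} would vanish and be equal), and $f_Z'(f_Y(x) + n_Y) \neq f_Z'(f_Y(x) + n'_Y)$ for some $n_Y, n'_Y$ in the support of $N_Y$ — but both of these arguments lie in $f_Y(x) + \mathrm{int}(\mathrm{supp}\, N_Y)$, where $f_Z'$ was just shown to be constant, a contradiction. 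If $n_Y$ or $n'_Y$ happens to be a boundary point of the support, one passes to a nearby interior point using continuity of $f_Z'$.

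I expect the main obstacles to be bookkeeping rather than conceptual. One is making the conditioning argument fully rigorous (regular conditional distributions exist on the Polish spaces involved). The more delicate one is handling the domains in the distribution-function step: whether $\mathrm{supp}\, N_Y$ is all of $\R$ — the clean case, and arguably what ``$F_{N_Y}$ strictly increasing'' most naturally means — or a proper interval, in which case the functional equation holds only on a window and one must verify that the arguments appearing in \eqref{eq:condition_non_linear_scm} fall inside it; this is exactly why choosing $x_0$ to be the witness point of \eqref{eq:condition_non_linear_scm} is the right move. A secondary remark worth a line is that the degenerate case $g \equiv \mathrm{const}$ (equivalently $X \perp Z$) is subsumed: then $\ell \equiv 0$ and the same differentiation forces $f_Z' \equiv 0$ on that interval, again contradicting \eqref{eq:condition_non_linear_scm}.
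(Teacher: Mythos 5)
Your proposal is correct, and its skeleton matches the paper's: argue by contradiction, condition on $X=x$, cancel $\phi_{N_Z}$ in the characteristic-function identity (using that it vanishes only on a null set, plus continuity) to conclude that the laws of $W_x := f_Z(f_Y(x)+N_Y)$ form a translation family, and finally differentiate the resulting functional identity in $x$ to contradict \eqref{eq:condition_non_linear_scm}. Where you genuinely diverge is the middle step, i.e.\ how ``translation family'' is upgraded to the pointwise identity $f_Z(u+f_Y(x)) - f_Z(u+f_Y(x_0)) = \ell(x)$. The paper works with densities: it reduces WLOG to uniform $N_Y$, applies the change-of-variables formula to get that $\tfrac{d}{dw}h_x^{-1}(w)$ is constant in $x$, concludes the $h_x^{-1}$ differ by additive constants $c(x)$, and then needs a separate support-matching argument to force $c(x)\equiv \mathrm{const}$. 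You instead work with distribution functions: writing $P(W_x\le w) = F_{N_Y}(f_Z^{-1}(w)-f_Y(x))$ and invoking injectivity of the strictly increasing $F_{N_Y}$ turns distributional equality directly into equality of the arguments, which yields the functional equation in one stroke with no residual shift to eliminate. Your route is arguably cleaner: it uses the ``strictly increasing c.d.f.'' hypothesis as stated rather than via the uniformization trick, avoids assuming densities exist, and makes the domain/support bookkeeping (including the choice of the witness point of \eqref{eq:condition_non_linear_scm} as the reference $x_0$, and the observation that $f_Y'(x)\neq 0$ there) explicit where the paper leaves it implicit. Both arguments are valid; the paper's density version and your c.d.f.\ version prove the same key intermediate fact.
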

The proof can be found in Appendix~A. 
Intuitively, Eq.~\ref{eq:condition_non_linear_scm} states that $f_Z$ is non-linear.
It also encodes the subtlety that this non-linearity must occur at a point, where additionally $f_Y$ is not constant and that is in the support of $Y$. 

In the following, we will assume without proof that concatenating more than two ANMs generically also does not result 
in an ANM (generalizing Lemma \ref{lem:non_linear_anm}). 
We further assume that variables $X_i$ and $X_j$ connected by a common cause do not admit an ANM, following known identifiability results
for multivariate ANMs in \cite{UAI_identifiability} with appropriate genericity conditions.   
Under this premise, a polytree contains an edge $X_i\to X_j$ if and only if  $Q_T(X_i, X_j) = 1$.
Motivated by this connection, we define 
	\[
Q_G(X_i,X_j):=
\begin{cases}
	1 \text{ if there is an edge } X_i\to X_j \text{ in } G\\
	0  \text{ else}
\end{cases} 
\]

The reader might wonder why we state assumptions about the generative process of the data in the section above, even though we have repeatedly emphasized that our theory does not need to reference a `true' causal model.
Note, that we primarily used Lemma~\ref{lem:non_linear_anm} to render polytrees a well-defined model for $Q_T$ in the sense of Definition~\ref{def:model}.
For the validity of \cref{thm:VCbound} it does not matter whether the data has actually been generated by an additive noise model (although we might be able to achieve a lower empirical risk in that case). 

\paragraph{Causal discovery algorithm}
We then estimate a graph $G$ with the following procedure.
\begin{enumerate}
	
	\item Apply the bivariate causality test $Q_T$ to $k$ randomly chosen ordered pairs. To estimate the functions $f_i$, we use the Gaussian Process implementation from sklearn \citep{scikit-learn} and to test independences we use the kernel independence test \citep{zhang2011kernel} as implemented by \citet{blobaum2022dowhy}.
	
	\item We add an edge if $Q_T(X_i, X_j) = 1$. 
	
	\item If the resulting graph is not a tree, in each undirected cycle we remove the edge, where $X_i\ind \hat N_i$ has the lowest $p$ value.
	
\end{enumerate}

\paragraph{Experimental setup}
We generate five causal graphs analogously to \cref{subsec:predicting_inds}, with the additional constraint that we do not add edges, if they would close an undirected circle. 
To generate the generalised additive mechanisms $f_{i, j}$ we use neural networks with a single hidden layer with 20 nodes, $\tanh$ activation function and uniformly random weights from $[0.1, 1)$.
Moreover, we used uniform instead of Gaussian noise.
All causal graphs in the experiment contain 20 nodes.
We draw 600 samples and use 0.05 as confidence level.
For each dataset, we draw $k$ tuples of variables and use the algorithm described above to estimate a polytree.
The plot in Figure~\ref{fig:exp_anm} shows difference between empirical error and the expected error for increasing $k$ on the same dataset, as well as the theoretical generalization bound from Eq~\ref{eq:VCbinarygeneral}.
Note, that the expectation is to be understood w.r.t. to tuples of variables.
This means that the expected error is simply the prediction error evaluated on all possible tuples of variables.
Also note, that we rescaled the bound by the factor $0.2$ for visualization purposes.
We repeated the causal discovery 20 times for each joint dataset (but with different randomly drawn marginals) and plotted the mean and the 90\% empirical quantile.
The difference
turns out to be small when the number of training tuples is large, in agreement with statistical learning theory.  
In Appendix B we provide additional plots with other datasets drawn according to the above procedure, to demonstrate that the results in Figure~\ref{fig:exp_anm} are not due to a peculiar ground truth model.

\begin{figure}[htp]
	\centering
	\includegraphics[width=.6\textwidth]{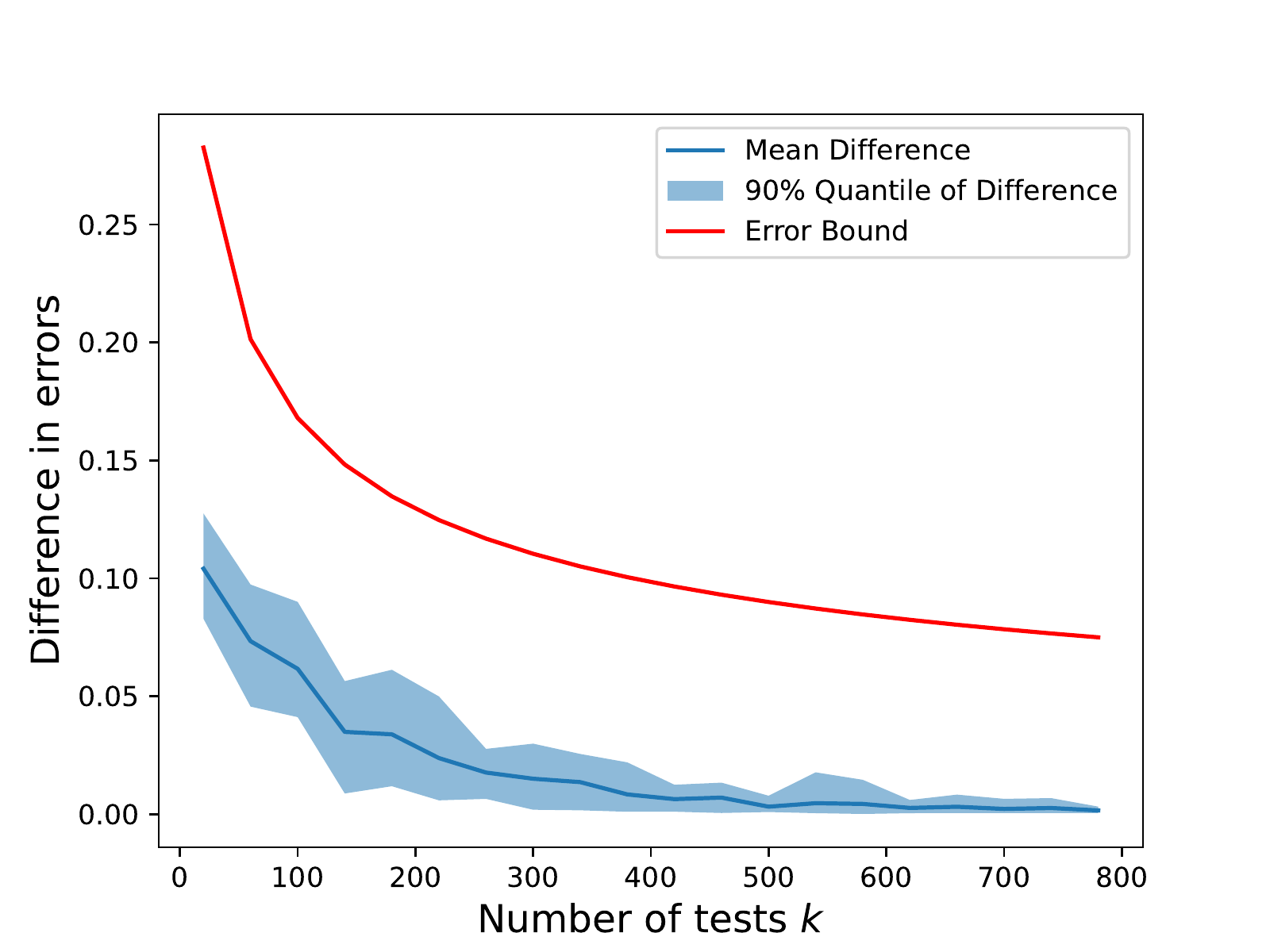}
	\caption{The absolute difference between empirical error $\frac{1}{k} \sum_{i=1}^k|Q_T(D_i) - Q_G(D_i)|$ and expected error $\mathbb{E}(|Q_T(D) - Q_G(D)|)$ for graphs with $n=20$ nodes when $k$ tuples of variables are used to fit a polytree.}
	\label{fig:exp_anm}
\end{figure}

\begin{remark}
	Unfortunately, inferring causal directions using the class of additive noise models raises the following dilemma:
	the class 
	is not 
	closed  under marginalization, e.g., if there is a (non-linear) ANM from $X$ to $Y$ and one from $Y$ to $Z$, there is, in the generic case, no 
	ANM from $X$ to $Z$, as we argued in Lemma~\ref{lem:non_linear_anm}. 
	For this reason, it would  not be mathematically consistent to use the ANM condition for inferring whether there is a directed path from a variable $X_i$ to $X_j$. Instead, checking ANM infers whether $X_i$ is a {\it direct} cause of $X_j$. The model class ANM thus suggests an {\it absolute} distinction 
	between `direct' and `indirect' influence, while in nature the distinction always refers to the set of observed variables (since we can always 
	zoom into the mechanism transmitting the information between the variables).  We will, however, accept this artificial distinction between 
	direct and indirect to get a mathematically consistent toy example. 
\end{remark}

\section{Revisiting Common Problems of Causal Discovery}
\label{sec:common_problems}
We now explain how our interpretation of causal models as predictors of statistical properties provides a slightly different perspective,
both on conceptual and practical questions of causal discovery. 
 
\paragraph{Predicting impact of interventions by merging distributions \label{sec:interventions}}

We have argued that causal hypotheses
provide strong guidance on how to merge probability distributions and thus
become empirically testable without resorting to interventions.
One may wonder whether this view on causality is completely disconnected to interventions.
Here we argue that it is not.
In some sense, estimating the impact of an intervention can also be phrased as the problem of inferring properties of unobserved joint distributions.

Assume we want to test whether the causal hypothesis $X\to Y$ is true.
We would then check how the distribution of $Y$ changes under randomized interventions on $X$.
Let us formally introduce a variable $F_X$ \citep{Pearl2000} that
can attain all possible values $x$ of $X$
(indicating to which value $x$ is set to)
or the value ${\tt idle}$ (if no intervention is made).
Whether $X$ influences $Y$ is then equivalent to
\begin{equation}\label{eq:notind}
F_X \not\independent Y.
\end{equation}
If we demand that this causal relation is unconfounded (as is usually intended by the notation $X\to Y$), we have to test the condition
\begin{equation}\label{eq:notconf}
P_{Y|F_X=x} = P_{Y|X=x}.
\end{equation}
Before the intervention is made,
both conditions \eqref{eq:notind} and \eqref{eq:notconf} refer to the unobserved distribution $P_{Y,F_X}$.
Inferring whether $X\to Y$ is true from $P_{X,Y}$ thus amounts to inferring
the unobserved distribution $P_{Y,F_X}$
from $P_{X,Y}$ plus the additional background knowledge regarding the statistical and causal relation between $F_X$ and $X$ (which is just based on the knowledge that the action we made has been in fact the desired intervention).
In applications it can be a non-trivial question why some action can be considered an intervention on a target variable at hand (for instance in complex gene-gene interactions). 
If one assumes that it is based on purely observational data (maybe earlier in the past),
we have reduced the problem of predicting the impact of interventions entirely to the problem of merging joint distributions.

\paragraph{Linear causal models for non-linear relations}
Our perspective justifies to
apply multivariate Gaussian causal models to data sets that are clearly non-Gaussian:
Assume a hypothetical causal graph is inferred from the conditional independence pattern obtained via {\it partial correlation tests} (which is correct only for multivariate Gaussians), as done by common causal inference software
\cite{TETRAD}. Even if one knows that
the graph only represents
partial correlations correctly, but not conditional independences, it
may predict well partial correlations
of unseen variable sets.
This way, the linear causal model can be helpful when the goal is only to predict linear statistics. This is good news
particularly because general conditional independence tests remain a difficult issue \citep{Shah2000}.

\paragraph{Tuning of confidence levels}
There is also another heuristic solution of a difficult question in causal inference that can be justified:
Inferring causal DAGs based on causal Markov condition and causal faithfulness \citep{Spirtes1993} relies on setting
the confidence levels for accepting conditional dependence. In practice, one will usually adjust the level such
that enough independences are accepted and enough are rejected for the sample size at hand.
Too few independences will results in a maximal DAG, too many in a graph with no edges.
Adjusting the confidence level is problematic, however, from the perspective of the common justification of causal faithfulness: if one rejects causal hypotheses with accidental conditional
independences because they occur `with measure zero' \citep{Meek1995},
it becomes questionable to set the confidence level high enough just because one wants ot get some independences accepted.\footnote{For a detailed discussion of how causal conclusions
of several causal inference algorithms
may repeatedly change after increasing the  sample size see \citep{Kelly2010}.}

Here we argue as follows instead:
Assume we are given any arbitrary confidence level as threshold for the conditional independence tests. Further assume we have  found
a DAG $G$ from a sufficiently small model class
that is consistent with all
the outcomes 'reject/accept' of
the conditional independence tests on a large number of subsets $S_1,\dots,S_k$. It is then justified to assume that $G$ will correctly predict the outcomes of this test
for unobserved variable sets $\tilde{S_1},\dots,\tilde{S}_l \subset S_1\cup \cdots \cup S_k$
for this particular confidence level.
This is because $G$ predicts the outcomes of the tests, not the properties themselves, just as in \cref{ex:gt_vs_pred}.

\paragraph{Methodological justification of causal faithfulness}
In our learning scenarios, DAGs are used to predict for some
choice of variables $X_{j_1},X_{j_2},\dots,X_{j_k}$
whether
\[
X_{j_1} \independent X_{j_2}\,|X_{j_3}, \dots,X_{j_k}.
\]
Without faithfulness, the DAG can only entail {\it in}dependence, but never entail dependence.
Rather than stating that `unfaithful distributions are unlikely' we need faithfulness
simply to obtain a definite prediction in the first place. 
This way, we avoid discussions about whether violations of faithfulness occur with probability zero 
(relying on assuming probability densities in parameter space \citep{Meek}, which has been criticized by \cite{LemeireJ2012}). 
After all, the argument  
is problematic  for finite data because distributions with weak dependences are not unlikely for DAGs with many nodes \citep{uhler2013}.   
Regardless of whether one believes that distributions in nature are faithful with respect to the `true DAG', any DAG that explains 
a sufficiently large set of dependences and independences is likely to also predict future (in)dependences.

\section{Conclusions}

We have described different scenarios where causal models can be used to infer statistical properties
of joint distributions of variables that have never been observed together. If 
the causal models are taken from a class of
sufficiently low VC dimension, this can be justified by generalization bounds from statistical learning theory. 

This opens a new pragmatic and context-dependent perspective on causality
where the essential empirical content of a causal model may consist in its prediction
regarding how to merge distributions from overlapping data sets. Such a pragmatic use of
causal concepts may be helpful for domains where 
the interventional definition of causality
raises difficult questions
(if one claims that the age of a person causally influences his/her income, as assumed in \citet{Mooij2016}, it is unclear  
what it means to intervene on the variable 'Age'). We have, moreover, argued that 
our pragmatic view of causal models is related to the usual concept of causality in terms of
interventions. 



\paragraph{Acknowledgements}
Thanks to Robin Evans for correcting remarks on an earlier version. Part of this work was done while Philipp Faller was an intern at Amazon Research.

\newpage

\appendix
\section*{Appendix A}
\label{app:theorem}
In this appendix we provide the proof for Lemma~\ref{lem:non_linear_anm}.
Some ideas in this proof are inspired  by \cite{chao2023interventional}.
\begin{proof}[for Lemma \ref{lem:non_linear_anm} ]
	We proof the statement by contradiction. 
	Assume there is function $g:\cX\to \cZ$ and a noise variable $\hat N$ such that $g(X) + \hat N = Z$ and $X\ind \hat N$.
	First, we define a function
	$
	h_x(n_Y) := f_Z(f_Y(x) +  n_Y) - g(x)
	$
	and denote $W:= h_X(N_Y)$. Then we have
	\begin{displaymath}
		\hat N = h_X( N_Y) + N_Z = W + N_Z.
	\end{displaymath}
	Note that $N_X(N_Y)$ is a random variable and for all $x\in\cX$ also $N_x(N_Y)$ is one.
	As $h_X(N_Y)+N_Z\ind X$ and as $N_Z$ is shielded from $X$ we have  for all $x, x'\in\cX$ that 
	\begin{equation}
		\label{eq:h_x_equal_in_dist}
		h_X(N_Y) + N_Z \overset{d}{=} h_x(N_Y) + N_Z \overset{d}{=} h_{x'}(N_Y) + N_Z.
	\end{equation}
	As $N_Y\ind N_Z$ we also get $h_x(N_Y)\ind N_Z$ for all $x\in \cX$. This renders all terms in \cref{eq:h_x_equal_in_dist} sums of independent variables and we have
	\begin{displaymath}
		\phi_{h_x(N_Y)}(t)\phi_{N_Z}(t) = \phi_{h_{x'}(N_Y)}(t)\phi_{N_Z}(t),
	\end{displaymath}
	for all $t\in \R$, where $\phi_V$ denotes the characteristic function of a random variable $V$.
	As we assumed $\phi_{N_Z}$ to be non-zero almost everywhere,  the equality ${\phi_{h_x{N_Y}}(t) = \phi_{h_{x'}{N_Y}}(t)}$ holds for almost all $t\in R$. 
	
	Thus the densities of $h_x(N_Y)$ also agree for all $x\in \cX$. 
	In other words, we also have $W\ind X$.
	
	As $f_Z$ is invertible, the function $n_Y \mapsto h_x(n_Y)$ is invertible as well.
	We may also  assume w.l.o.g. that $N_Y$ is uniformly distributed, as any other continuos noise with strictly increasing c.d.f. could be achieved by a invertible transformation of the uniform $N_Y$. 
	Let $\mathcal{W}$ be the support of $W$.
	Using the density change formula, the fact that $W\ind X$ and the monotonicity of $f_Z$, we can write for all $w\in \mathcal{W}$ and $x\in\cX$
	\begin{align*}
		p_{W}(w) &= p_{W\mid X}(w\mid x) = p_{N_Y}(h^{-1}_x(w))\left|\frac{d}{d w}(h^{-1}_x(w))\right| = c_1 \left|\frac{d}{d w}(h^{-1}_x(w))\right| 
		\\
		&= c_1 \frac{d}{d w}(h^{-1}_x(w))
	\end{align*}
	for some constant $c_1$, where $p_{N_Y}$ denotes the uniform density of $N_Y$. 
	As this holds for all $x\in\cX$, the factor $\frac{d}{dw}h^{-1}_x(w)$ must be constant in $x$.
	Moreover, as all $h_x^{-1}$ have the same derivatives for all $w\in \mathcal{W}$, the functions $h^{-1}_x$ can only differ by an additive term, which may depend on $x$.  I.e.
	\begin{displaymath}
		h^{-1}_x(w) = h^{-1}_{x'}(w) + c(x),
	\end{displaymath} 
	By fixing one of these functions as $h^* := h_x$ we can express all $h_x$ via
	\begin{displaymath}
		h_{x'}(n_y) = h^*(n_Y - c(x')).
	\end{displaymath}
	Further, as $W$ and $X$ are independent, the support of $W$ must be the same for all $w\in \mathcal{W}$.
	We denote for some function $f$ and set $S$ the image $f(S) := \{f(s) \mid s\in S\}$. So we get
	\begin{align*}
		h^*(\cW + c(x)) &= h^*(\cW + c(x'))\\
		\cW + c(x) &=\cW + c(x'),
	\end{align*}
	where we used the invertibility of $h^*=h_x$.
	Clearly, $c(x)$ must be constant then.
	Now finally, we have that $h_x(n_Y) = f_Z(f_Y(x) + n_Y) - g(x)$ is constant in $x$, i.e. for all $x\in\cX, n_Y\in\mathcal{N}_Y$
	\begin{align*}
		\frac{\partial}{\partial x} h_x(n_Y) &= 0\\
		\frac{\partial}{\partial x} f_Z(f_Y(x) + n_Y) - \frac{\partial}{\partial x} g(x) &= 0\\
		\frac{\partial}{\partial x} f_Z(f_Y(x) + n_Y) &= \frac{\partial}{\partial x} g(x),
	\end{align*}
	which is a contradiction to \cref{eq:condition_non_linear_scm}.
	
\end{proof}

Note that the non-zero characteristic function of the noise terms, required for Lemma~\ref{lem:non_linear_anm}, are for example provided by positive noise terms or noise terms with compact support
\footnote{Then the Paley-Wiener theorems (e.g. \cite{strichartz2003guide}, Theorem 7.2.1 and Theorem 7.2.4) imply that the characteristic function is analytic. The zeros of analytic functions have Lebesgue-measure zero \citep{mityagin2015zero} if the functions are not constantly zero.}.
Especially, this holds true for uniformly distributed noise.

Further assume e.g. that $f_Z$ and $f_Y$ are analytic functions in finitely many real parameters.
I.e. for $a\in \R^k, b\in \R^{k'}$ denote them as $f^a_Z$ and $f^b_Y$.
Then the set of functions violating \cref{eq:condition_non_linear_scm} has Lebesgue-measure zero in the parameter space unless all functions violate it.
This can be seen as follows: 
If $f^a_Z$ and $f^b_Y$ are analytic in $a, b, x$ and $y$, then the map
\begin{equation}
	\label{eq:map_analytic_zero}
	(a, b, x, n_Y, n'_Y) \mapsto \frac{\partial}{\partial x}f^a_Z(f^b_Y(x) +  n_Y) - \frac{\partial}{\partial x} f^a_Z(f^b_Y(x) + n'_Y)
\end{equation} 
is analytic as well.
As setting of the model parameters $a, b$ and values $x, n_Y, n'_Y$ that falsify the inequality in Eq.~\ref{eq:condition_non_linear_scm} are a zero of the mapping in Eq.~\ref{eq:map_analytic_zero}.
So either the mapping in Eq.~\ref{eq:map_analytic_zero} is constant zero, or Eq.~\ref{eq:condition_non_linear_scm} holds only for a subset with Lebesgue-measure zero in the space $\R^k\times \R^{k'}\times \cX\times \mathcal{N}_Y\times \mathcal{N}_Y$\citep{mityagin2015zero}. 
So Eq.~\ref{eq:condition_non_linear_scm} holds for almost all functions indexed by $(a, b)$ or none of them.
For example, the set of linear functions is a set where all functions violate \cref{eq:condition_non_linear_scm}, while $f_Z(f_Y(x) + n_Y) = \exp(x + n_Y)$ fulfils it.


\section*{Appendix B}
\begin{figure}[htp]
	\centering
	\includegraphics[width=0.7\textwidth]{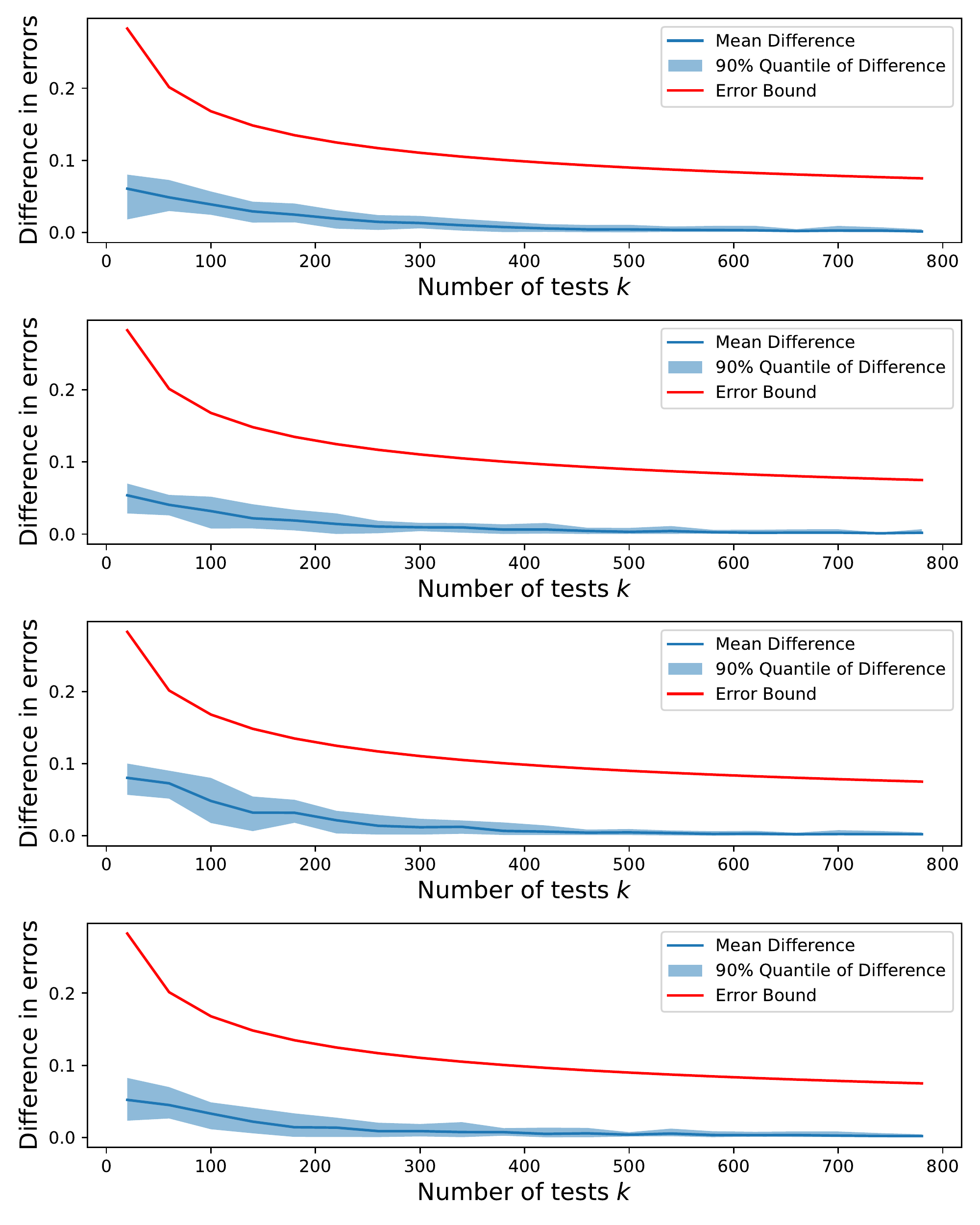}
	\caption{The difference between empirical error $\frac{1}{k} \sum_{i=1}^k|Q_T(D_i) - Q_G(D_i)|$ and expected error $\mathbb{E}|Q_T(D) - Q_G(D)|$ for graphs with $n=20$ nodes when $k$ tuples of variables are used to fit a polytree. These are additional simulated datasets}
\end{figure}

\end{document}